\acrodef{CNN}[CNN]{convolutional neural network}
\acrodef{ReLU}[ReLU]{rectified linear unit}
\acrodef{TP}[TP]{target propagation}
\acrodef{AI}[AI]{artificial intelligence}
\acrodef{hEP}[hEP]{holomorphic EP}
\acrodef{MLP}[MLP]{multi-layer perceptron}
\newtheorem{theorem}{Theorem}
\newtheorem{lemma}{Lemma}
\title{Holomorphic Equilibrium Propagation Computes Exact Gradients Through Finite Size Oscillations}
\author{%
  Axel Laborieux$^{1}$ \And Friedemann Zenke$^{1,2}$ \AND
  \texttt{\{firstname.lastname\}@fmi.ch} \\
  $^{1}$ Friedrich Miescher Institute for Biomedical Research, Basel, Switzerland \\
  $^{2}$ Faculty of Natural Sciences, University of Basel, Basel, Switzerland
}
\begin{document}

\maketitle

\begin{abstract}
Equilibrium propagation (EP) is an alternative to backpropagation (BP) that allows the training of deep neural networks with local learning rules. 
It thus provides a compelling framework for training neuromorphic systems and understanding learning in neurobiology. 
However, EP requires infinitesimal teaching signals, thereby limiting its applicability in noisy physical systems. 
Moreover, the algorithm requires separate temporal phases and has not been applied to large-scale problems. Here we address these issues by extending EP to holomorphic networks. 
We show analytically that this extension naturally leads to exact gradients even for finite-amplitude teaching signals. 
Importantly, the gradient can be computed as the first Fourier coefficient from finite neuronal activity oscillations in continuous time without requiring separate phases. 
Further, we demonstrate in numerical simulations that our approach permits robust estimation of gradients in the presence of noise and that deeper models benefit from the finite teaching signals. 
Finally, we establish the first benchmark for EP on the ImageNet $32 \times 32$ dataset and show that it matches the performance of an equivalent network trained with BP. 
Our work provides analytical insights that enable scaling EP to large-scale problems and establishes a formal framework for how oscillations could support learning in biological and neuromorphic systems.
\end{abstract}

\section{Introduction}
\label{intro}
The backpropagation (BP) of error algorithm \cite{rumelhart1986learning} underpins the ability of state-of-the-art deep neural networks to learn useful representations from structured data such as speech, vision, and text \cite{lecun2015deep}.
BP stands out as the most successful algorithm to solve the credit assignment problem in artificial neural networks \cite{lillicrap2020backpropagation, bartunov2018assessing}, which can be defined by the following question: How should a synaptic connection be modified in order to improve the global performance of the network to perform a task, as measured by some objective function?
This is a difficult question since individual synapse may have a complicated influence on downstream processing.
BP solves credit assignment through the chain rule of differentiation \cite{rumelhart1986learning}.
Although BP is efficiently implemented in software, it is difficult to conceive how BP could plausibly be implemented in biological systems.
The problematic aspects are BP's use of symmetric connections and the need for two separate phases: A nonlinear forward pass that propagates neuronal activity and a linear backward pass that carries signed gradient signals \cite{lillicrap2020backpropagation}. 
These two types of processing are also inconvenient for training physical neural networks, since both should be handled by the same circuit, and explicitly propagating errors does not harness the device mismatches typical of neuromorphic hardware \cite{narayanan2017toward, ambrogio2018equivalent}.
Despite its implausibility, representations learned with BP match representations of in-vivo data \cite{yaminS2014performance} better than networks trained with purely biologically-motivated learning rules such as STDP \cite{dan2004spike, tavanaei2019deep}.
This discrepancy raises the question as to whether and how neural dynamics could implement gradient-based credit assignment, and whether it could be as effective as BP to learn useful representations \cite{lillicrap2020backpropagation}.

Equilibrium propagation (EP) \cite{scellier2017equilibrium} is an alternative algorithm for performing credit assignment in dynamical systems that converge to a fixed point, such as energy-based models \cite{hopfield1984neurons}.
EP also proceeds in two phases: In the first phase the dynamical system is presented with static input data until the units settle into an equilibrium or fixed point.
We refer to this state as the free equilibrium.
In a second phase, a teaching signal slightly nudges designated output units towards a target value until the dynamics settle into a second equilibrium that is called the nudged equilibrium.
EP estimates loss gradients by comparing the neuronal activity between the two equilibria.
EP is appealing because the resulting learning rule is spatially local when the energy function consists of two-body interactions, as for instance in continuous Hopfield networks \cite{hopfield1984neurons}. 
Furthermore EP provably approximates the true gradient in the limit of vanishing nudging \cite{scellier2017equilibrium}.
More generally, the implicit differentiation carried out by EP makes it suitable for meta learning \cite{zucchet2021contrastive}, where explicitly backpropagating errors through an inner optimization loop becomes prohibitive due to the high memory requirement of storing intermediate time steps for regular automatic differentiation.

Nevertheless, classic EP \cite{scellier2017equilibrium, ernoult2019updates, laborieux2021scaling} has several limitations.
First, EP estimates only approach the actual loss gradient in the limit of a vanishing nudging or teaching signal.
This requirement makes it impractical for noisy neuromorphic systems where noise can confound small amplitude teaching signals and also unrealistic as a model for learning in the brain where feedback strongly modulates processing.   
Moreover, the mechanisms by which biological circuits could satisfy the requirement for separate phases remains elusive. 
Finally, while EP can train deep networks on CIFAR-10 \cite{laborieux2021scaling}, it has remained an open question whether it can be scaled up to larger and more complex tasks \cite{bartunov2018assessing}. 

In this article, we show that by extending EP with holomorphic network dynamics it naturally estimates exact gradients for finite teaching signals. 
Mathematically, the exact gradients are encoded as a Fourier coefficient of adiabatic neural oscillations. 
This finding suggest a natural way of estimating the gradients online through suitable synaptic filtering operations which dispenses with the need for separate phases, in a similar spirit to \citet{baldi1991contrastive} for contrastive Hebbian learning \cite{movellan1991contrastive}.  
Our main contributions are the following: 
\begin{itemize}
    \item We develop the theory of \ac{hEP} and prove that this allows computing exact gradients locally at synapses from finite teaching signal amplitudes of adiabatic oscillations.
    \item We numerically quantify the accuracy of our estimate and show that it outperforms classic EP, especially in the presence of substrate noise and in deep neural networks.
    \item We demonstrate learning with an always-on oscillating teaching signal, thereby alleviating the need for separated phases. 
    \item Finally, we show that \ac{hEP} achieves the same performance as BP in deep \acp{CNN} trained on CIFAR-10/100 \cite{Krizhevsky09learningmultiple}, and ImageNet $32 \times 32$ \cite{chrabaszcz2017downsampled}.
\end{itemize}

\section{Background and previous work}
\label{background}
\paragraph{Equilibrium propagation (EP).} 
EP \citep{scellier2017equilibrium} allows training convergent dynamical systems to optimize a loss function. 
We denote neuronal unit activity by the vector $\mathbf{s}$, and the learnable parameters such as weights and biases by $\boldsymbol{\theta}$ (Fig.~\ref{fig:setup}a).
The system's dynamics are given as the gradient of a scalar energy function $E(\boldsymbol{\theta}, \mathbf{s})$:
\begin{equation}
    \label{eq:dynamics}
    \frac{\mathrm{d}\mathbf{s}}{\mathrm{d}t} = - \frac{\partial E}{\partial \mathbf{s}}(\boldsymbol{\theta}, \mathbf{s}).
\end{equation}
As a consequence, EP can train any energy-based models, e.g.,  Hopfield networks \cite{hopfield1984neurons} to perform classification \cite{scellier2017equilibrium, ernoult2019updates, laborieux2021scaling}. 
In classic EP, training proceeds in two phases.
First, a subset of units are clamped to the input $\mathbf{x}$ and the system goes to a `free' fixed point denoted by $\mathbf{s}^{\ast}_{0}$.
Second, the loss function $\ell(\boldsymbol{\theta}, \mathbf{s}, \mathbf{y})$, with target $\mathbf{y}$, is scaled with a small positive nudging factor $\beta$ and added to the energy function $E$ which yields the total energy $F(\boldsymbol{\theta}, \mathbf{s}, \beta, \mathbf{y}) := E + \beta\ell(\boldsymbol{\theta}, \mathbf{s}, \mathbf{y})$.
This added teaching signal causes the system to reach a second equilibrium 
$\mathbf{s}^{\ast}_{\beta}$, again by minimizing the total energy $F$.
Although we write that $\ell$ takes all units $\mathbf{s}$ as argument, in practice typically only output units which encode the target label and thus serve as inputs for teaching signals are considered (Fig.~\ref{fig:setup}b).
The learning objective of the system is to optimize the loss function $\ell$ at the free fixed point, which is defined by $\mathcal{L}(\boldsymbol{\theta}, \mathbf{x}, \mathbf{y}) := \ell(\boldsymbol{\theta}, \mathbf{s}^{\ast}_{0}, \mathbf{y})$.
\citet{scellier2017equilibrium} showed that:
\begin{equation}
    \label{eq:origEP}
    \lim_{\beta \to 0}\frac{1}{\beta} \left( \frac{\partial F}{\partial \boldsymbol{\theta}}(\boldsymbol{\theta}, \mathbf{s}^{\ast}_{\beta}, \beta, \mathbf{y}) - \frac{\partial F}{\partial \boldsymbol{\theta}}(\boldsymbol{\theta}, \mathbf{s}^{\ast}_{0}, 0, \mathbf{y}) \right) = \frac{\mathrm{d}\mathcal{L}}{\mathrm{d}\boldsymbol{\theta}}.
\end{equation}
This result requires $F$ to be twice continuously differentiable and assumes that one can apply the implicit function theorem to the equilibrium equation $\partial_{\mathbf{s}}F(\boldsymbol{\theta}, \mathbf{s}^{\ast}_{0}, 0)=0$ (the $\mathbf{y}$ argument is hereafter omitted for clarity), so that $\beta \mapsto \mathbf{s}^{\ast}_{\beta}$ is a continuously differentiable map \cite{scellier2017equilibrium}.
In practice, the left-hand side of Eq.~\eqref{eq:origEP} is estimated by finite differences \cite{scellier2017equilibrium, ernoult2019updates, laborieux2021scaling, luczak2022neurons}.
The appeal of EP for biological plausibility \cite{whittington2019theories, lillicrap2020backpropagation} and neuromorphic hardware \cite{baldi1991contrastive, kendall2020training, zoppo2020equilibrium, stergiou2101refining} arises from the fact that~(i) the system only needs to propagate neural activities (Fig.~\ref{fig:setup}a) and~(ii) in layered neural networks with a Hopfield energy \cite{hopfield1984neurons} (Fig.~\ref{fig:setup}b), the left-hand side of Eq.~\eqref{eq:origEP} can be computed  by a Hebbian-like learning rule as the product of pre- and postsynaptic activity.
In summary, EP implicitly propagates error signals through differences of neuronal activity during the two phases, whereas BP propagates error gradients explicitly \cite{whittington2019theories, lillicrap2020backpropagation}.
However, Eq.~\eqref{eq:origEP} only holds in the limit $\beta \to 0$ where activity differences vanish, which can pose a problem in the presence of noise or when activity differences vanish with network depth, which is related to the problem of vanishing gradients.
In the following, we introduce \acf{hEP} which avoids these issues by estimating exact gradients with finite $\beta$, and thus from finite amplitude teaching signals.

\begin{figure}[tb]
  \centering
  \includegraphics[width=0.9\textwidth]{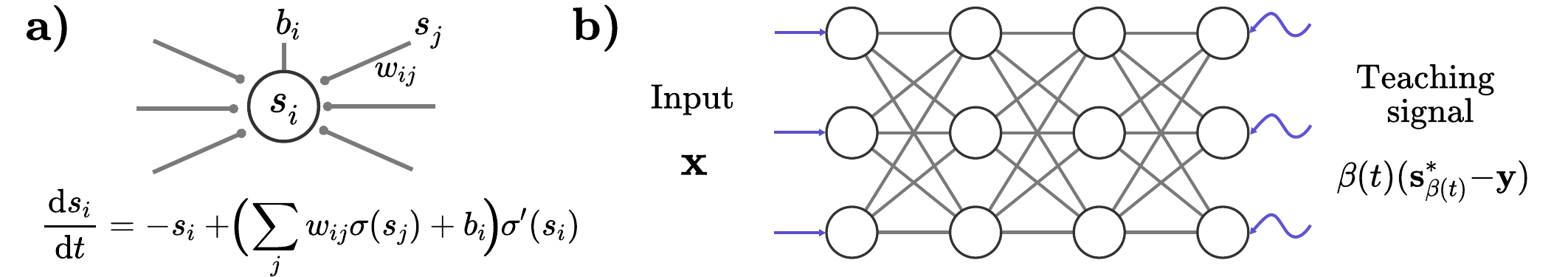}
  \caption{
  \textbf{a)}~Schematic of the neuron model of a continuous Hopfield network \cite{hopfield1984neurons} with holomorphic activation function $\sigma$. 
  The neuron $s_{i}$ receives input both from upstream and downstream neurons $s_{j}$ plus a bias current $b_{i}$. 
  \textbf{b)}~In a typical supervised learning the input $\mathbf{x}$ is clamped, causing the network dynamics to settle into a fixed point. 
  A complex-valued oscillating teaching signal added to the output causes neuronal activity to fluctuate around this fixed point.}
  \label{fig:setup}
\end{figure}

\section{Theoretical results}
\label{sec:theory}

Our main contribution is to show that if $F$ is holomorphic, i.e., differentiable in the sense of complex variables (see Appendix~\ref{sec:app_theory_background} for the definition), \ac{hEP} computes the gradient of the objective function $\mathcal{L}$ for finite $\beta$, i.e., without  requiring the vanishing nudging signals (cf.\ Eq.~\eqref{eq:origEP}).
To accomplish this \ac{hEP} requires a non-vanishing teaching signal that evolves `adiabatically' in the complex plane with respect to the dynamics of the system (Fig.~\ref{fig:setup}b).
In other words, we require the dynamical system to relax to its equilibrium on a much shorter timescale than the timescale of the nudge.

\paragraph{Derivation of holomorphic EP.}
\label{sec:holoEP}
To show that \ac{hEP} yields an unbiased gradient through finite adiabatic nudging, we use the same notation as in Section~\ref{background}.
Specifically, we extend the theory by \citet{scellier2017equilibrium} to the complex case and to dynamical systems, or networks whose  scalar function governing the dynamics is holomorphic.
In line with classic EP we assume that the dynamical system has a free fixed point as described above.

\begin{lemma}[Holomorphic Equilibrium Propagation]
Let~$F$ be a scalar function governing the dynamics, so that the holomorphic implicit function theorem can be applied to the fixed point equation $\partial_{\mathbf{s}}F(\boldsymbol{\theta}, \mathbf{s}^{\ast}_{0}, 0) = 0$, then the gradient formula of equilibrium propagation (Eq.~\eqref{eq:origEP}) holds in the sense of complex differentiation.
\label{lemma:hEP}
\end{lemma}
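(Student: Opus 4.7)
The plan is to transcribe the classical Scellier--Bengio argument for Eq.~\eqref{eq:origEP} into the holomorphic category, replacing each real-analytic tool (implicit function theorem, chain rule, equality of mixed partials) with its complex-analytic counterpart. The hypothesis of the lemma already grants me the crucial input: by the holomorphic implicit function theorem applied to $\partial_{\mathbf{s}}F(\boldsymbol{\theta}, \mathbf{s}^{\ast}_{0}, 0)=0$, there exists a map $\beta \mapsto \mathbf{s}^{\ast}_{\beta}$ that is holomorphic on an open neighborhood of $0 \in \mathbb{C}$ and satisfies the fixed point identity $\partial_{\mathbf{s}}F(\boldsymbol{\theta}, \mathbf{s}^{\ast}_{\beta}, \beta) \equiv 0$ there. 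By treating $\boldsymbol{\theta}$ on the same footing, I obtain a jointly holomorphic parametrization in a neighborhood of $(\boldsymbol{\theta}, 0)$.

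Next I would introduce the reduced energy $\Phi(\boldsymbol{\theta}, \beta) := F(\boldsymbol{\theta}, \mathbf{s}^{\ast}_{\beta}, \beta)$. As a composition of holomorphic maps it is itself holomorphic in $(\boldsymbol{\theta}, \beta)$. Applying the complex chain rule in $\beta$ yields
\begin{equation*}
\frac{\partial \Phi}{\partial \beta} = \frac{\partial F}{\partial \beta}(\boldsymbol{\theta}, \mathbf{s}^{\ast}_{\beta}, \beta) + \frac{\partial F}{\partial \mathbf{s}}(\boldsymbol{\theta}, \mathbf{s}^{\ast}_{\beta}, \beta) \cdot \frac{\mathrm{d}\mathbf{s}^{\ast}_{\beta}}{\mathrm{d}\beta} = \ell(\boldsymbol{\theta}, \mathbf{s}^{\ast}_{\beta}),
\end{equation*}
since the fixed-point equation kills the second term and $\partial_\beta F = \ell$ by definition of $F = E + \beta \ell$. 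An identical calculation in $\boldsymbol{\theta}$ gives $\partial_{\boldsymbol{\theta}}\Phi = \partial_{\boldsymbol{\theta}}F(\boldsymbol{\theta}, \mathbf{s}^{\ast}_{\beta}, \beta)$, because the fixed-point identity again annihilates the contribution of the chain rule through $\mathbf{s}^{\ast}_{\beta}$.

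The decisive step is to invoke the holomorphic Schwarz theorem on $\Phi$: a holomorphic function of several complex variables is locally analytic, so its mixed complex partials commute. Equating $\partial_{\beta}\partial_{\boldsymbol{\theta}}\Phi = \partial_{\boldsymbol{\theta}}\partial_{\beta}\Phi$ produces
\begin{equation*}
\frac{\partial}{\partial \beta}\left[\frac{\partial F}{\partial \boldsymbol{\theta}}(\boldsymbol{\theta}, \mathbf{s}^{\ast}_{\beta}, \beta)\right] = \frac{\partial}{\partial \boldsymbol{\theta}}\,\ell(\boldsymbol{\theta}, \mathbf{s}^{\ast}_{\beta}),
\end{equation*}
and evaluating at $\beta=0$ gives $\mathrm{d}\mathcal{L}/\mathrm{d}\boldsymbol{\theta}$ on the right. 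The left-hand side is the complex derivative at $0$ of the holomorphic map $\beta \mapsto \partial_{\boldsymbol{\theta}}F(\boldsymbol{\theta}, \mathbf{s}^{\ast}_{\beta}, \beta)$, which by definition is exactly the limit appearing in Eq.~\eqref{eq:origEP} interpreted in the complex sense. I do not anticipate any genuine obstacle here: the proof is essentially bookkeeping once the holomorphic IFT is granted. The only delicate point is to verify explicitly that each object I differentiate is holomorphic in the relevant joint variables, so that the complex chain rule and Schwarz's theorem may be invoked without qualification; this is where I would spend most of the care in a fully rigorous write-up.
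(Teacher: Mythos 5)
Your proposal is correct and follows essentially the same route as the paper's proof: holomorphic implicit function theorem to get a holomorphic fixed-point map, the complex chain rule with the fixed-point equation annihilating the $\partial_{\mathbf{s}}F$ term (the paper makes the vanishing of the conjugate-variable term explicit via the Cauchy--Riemann condition, which your appeal to holomorphy of the composition subsumes), and Schwarz's theorem to interchange the mixed partials of the reduced energy $F(\boldsymbol{\theta}, \mathbf{s}^{\ast}_{\boldsymbol{\theta},\beta}, \beta)$.
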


\begin{proof}
The proof is an extension of the one provided by \citep{scellier2017equilibrium} for the real nudging case. 
The holomorphic implicit function theorem ensures that there exists an open set $U \in \mathbb{C}$ including 0 such that the implicit map $\beta \in U \mapsto \mathbf{s}^{\ast}_{\boldsymbol{\theta}, \beta}$ is holomorphic on $U$.
In particular, the fixed point $\mathbf{s}^{\ast}_{\boldsymbol{\theta}, \beta}$ is defined on $U$
(see Fig.~\ref{fig:overview}b for how this area looks for a toy example).
The proof proceeds in two steps.
First we show that the total derivatives of $F$ with respect to $\boldsymbol{\theta}$ and $\beta$ can still be interchanged for complex variables by virtue of the Schwarz theorem.
Second we show that, at the fixed point, the total derivative of $F$ with respect to $\beta$ ($\boldsymbol{\theta}$) is still equal to the partial derivatives with respect to $\beta$ ($\boldsymbol{\theta}$).
To that end, we apply the chain rule of complex differentiation in:
\begin{equation}
    \label{eq:key_ingredient}
    \frac{\mathrm{d}F}{\mathrm{d}\beta}(\boldsymbol{\theta}, \mathbf{s}_{\boldsymbol{\theta}, \beta}, \beta) = \frac{\partial F}{\partial \beta}(\boldsymbol{\theta}, \mathbf{s}_{\boldsymbol{\theta}, \beta}, \beta) + \frac{\partial F}{\partial \mathbf{s}} \cdot \frac{\partial \mathbf{s}}{\partial \beta}(\boldsymbol{\theta}, \beta) + \frac{\partial F}{\partial \mathbf{\overline{s}}} \cdot \frac{\partial \mathbf{\overline{s}}}{\partial \beta}(\boldsymbol{\theta}, \beta),
\end{equation}
where $\mathbf{\overline{s}}$ denotes the complex conjugate of $\mathbf{s}$.
At equilibrium, the second term on the right hand side cancels by definition of the fixed point, and the third term is zero because $F$ is holomorphic, i.e., its derivative with respect to the conjugate variable is zero according to the Cauchy-Riemann condition~\cite{appel2007mathematics}.
The same argument holds for the derivative with respect to $\boldsymbol{\theta}$.
Therefore, interchanging the total derivatives of $F$ with respect to $\beta$ and $\boldsymbol{\theta}$, and replacing the inner total derivatives by the partial derivatives, we obtain that the EP gradient formula (Eq.~\eqref{eq:origEP}) still holds, but for \emph{complex} differentiation (Appendix~\ref{sec:app_theory_background}):
\begin{equation}
    \label{eq:proof}
    \left. \frac{\mathrm{d}}{\mathrm{d}\beta} \right|_{\beta=0} \left( \frac{\partial F}{\partial \boldsymbol{\theta}} (\boldsymbol{\theta}, \mathbf{s}^{\ast}_{\boldsymbol{\theta}, \beta}, \beta) \right) =  \frac{\mathrm{d}}{\mathrm{d}\boldsymbol{\theta}} \frac{\partial F}{\partial\beta} (\boldsymbol{\theta}, \mathbf{s}^{\ast}_{\boldsymbol{\theta}, \beta}, \beta) = \frac{\mathrm{d}\mathcal{L}}{\mathrm{d}\boldsymbol{\theta}},
\end{equation}
which concludes the proof (a more detailed version is in Appendix \ref{sec:app_proof_lemma}).
\end{proof}
We can now evaluate the left hand side of Eq.~\eqref{eq:proof} using a Cauchy integral (Appendix~\ref{sec:app_theory_background}):
\begin{theorem}[Exact gradient from finite teaching signals]
Assuming that the conditions of Lemma~\ref{lemma:hEP} are met and let $|\beta|>0$ be the radius of a circular path around 0 in $\mathbb{C}$ contained in the open set $U$ on which the fixed point $\mathbf{s}^{\ast}_{\boldsymbol{\theta}, \beta}$ is defined. 
Further assume that this path is parameterized by $t \in [0,T] \mapsto \beta(t) = |\beta|e^{2 \mathrm{i} \pi t/T}$, where $\mathrm{i}$ is the imaginary unit.
Then the loss gradient is given by:
\begin{equation}
    \label{eq:fourier_coeff}
    \frac{\mathrm{d}\mathcal{L}}{\mathrm{d}\boldsymbol{\theta}} = \frac{1}{T|\beta|} \int_{0}^{T} \frac{\partial F}{\partial \boldsymbol{\theta}}\left(\boldsymbol{\theta}, \mathbf{s}^{\ast}_{\boldsymbol{\theta}, \beta(t)}, \beta(t)\right)e^{-2 \mathrm{i} \pi t/T}\mathrm{d}t.
\end{equation}
\label{eq:heqprop_theorem}
\end{theorem}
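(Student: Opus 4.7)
The plan is to recognize that Eq.~\eqref{eq:fourier_coeff} is nothing but Cauchy's integral formula for the first derivative at $\beta = 0$ of the auxiliary function $g(\beta) := \frac{\partial F}{\partial \boldsymbol{\theta}}(\boldsymbol{\theta}, \mathbf{s}^{\ast}_{\boldsymbol{\theta}, \beta}, \beta)$, combined with Lemma~\ref{lemma:hEP}. First I would verify that $g$ is holomorphic on an open disk around $0$ contained in $U$: this follows because $F$ is holomorphic in all of its arguments and, by the holomorphic implicit function theorem invoked in Lemma~\ref{lemma:hEP}, the map $\beta \mapsto \mathbf{s}^{\ast}_{\boldsymbol{\theta}, \beta}$ is holomorphic on $U$; the composition of holomorphic maps is holomorphic. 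By Lemma~\ref{lemma:hEP}, the loss gradient equals $g'(0)$, so it suffices to represent $g'(0)$ as an integral along the prescribed circular path.

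Next I would apply Cauchy's integral formula for the first derivative,
\begin{equation*}
g'(0) = \frac{1}{2\pi \mathrm{i}} \oint_{\gamma} \frac{g(z)}{z^{2}}\, \mathrm{d}z,
\end{equation*}
where $\gamma$ is the positively-oriented circle of radius $|\beta|$ around $0$. Substituting the given parameterization $z = \beta(t) = |\beta| e^{2 \mathrm{i} \pi t / T}$ gives $\mathrm{d}z = \frac{2 \mathrm{i} \pi}{T}\, |\beta| e^{2 \mathrm{i} \pi t / T}\, \mathrm{d}t$ and $z^{2} = |\beta|^{2} e^{4 \mathrm{i} \pi t / T}$; plugging these in and collecting the exponentials yields exactly the right-hand side of Eq.~\eqref{eq:fourier_coeff}. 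Interpreted in signal-processing terms, the formula extracts the first Fourier coefficient of the $T$-periodic complex signal $t \mapsto g(\beta(t))$, which is the link to the adiabatic oscillation picture emphasized in the paper.

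The only subtle point — and arguably the main obstacle — is ensuring the hypotheses of Cauchy's formula: $g$ must extend holomorphically not just to the circle $\gamma$ but to the entire closed disk it bounds. Since $U$ is open and contains $0$, any sufficiently small closed disk around $0$ is contained in $U$, so the theorem's phrase "circular path contained in $U$" should be read as implicitly requiring the enclosed disk to lie in $U$ as well (which can always be arranged by shrinking $|\beta|$). Once this is granted, everything reduces to Lemma~\ref{lemma:hEP} and a textbook application of complex analysis, with no further estimates needed.
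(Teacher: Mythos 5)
Your proposal is correct and follows essentially the same route as the paper's own proof in Appendix~\ref{sec:app_proof_theorem}: holomorphy of $\beta \mapsto \partial_{\boldsymbol{\theta}}F(\boldsymbol{\theta}, \mathbf{s}^{\ast}_{\boldsymbol{\theta},\beta},\beta)$ by composition, Cauchy's formula for the first derivative at $0$, the change of variables along the circular parameterization, and Lemma~\ref{lemma:hEP} to identify the result with $\mathrm{d}\mathcal{L}/\mathrm{d}\boldsymbol{\theta}$. Your remark that the enclosed closed disk (not merely the circle) must lie in $U$ is a valid reading and matches the paper's proof, which explicitly assumes $U$ contains the disk of radius $|\beta|$ centered at $0$.
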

The full proof is given in Appendix~\ref{sec:app_proof_theorem}.
Theorem~\ref{eq:heqprop_theorem} guarantees that given holomorphic dynamics we can dispense with the requirement of vanishing teaching signal $|\beta| \to 0$ in the limit of `adiabatic' nudging which corresponds to integrating over infinitely many fixed points with a \emph{finite} $|\beta|$.
Note, that complex-valued teaching signals $\beta \in \gamma$ produce fixed points in the complex plane computed through the same equations as in the real case (see Appendix \ref{sec:app_detailed_archi}).
In particular, multi-layered neural networks (Fig.~\ref{fig:setup}b) can be trained by using the continuous Hopfield dynamics \cite{hopfield1984neurons}. 
The trainable parameters are the weights and biases $\boldsymbol{\theta} = (w_{ij}, b_{i})$ and the total energy function $F$ is given by:
\begin{equation}
    \label{eq:tot_ene_hopfield}
    F(\boldsymbol{\theta}, \mathbf{s}, \beta, \mathbf{y}) = \frac{1}{2}\sum_{i}s_{i}^{2} - \frac{1}{2}\sum_{i \neq j}w_{i,j}\sigma(s_{i})\sigma(s_{j}) - \sum_{i}b_{i}\sigma(s_{i}) + \beta \ell(\boldsymbol{\theta}, \mathbf{s}, \mathbf{y}).
\end{equation}
If the activation function $\sigma$ is holomorphic, which is true in the case of sigmoid functions, the same $F$ (Eq.~\eqref{eq:tot_ene_hopfield}) can be evaluated with complex $\beta$, and we can apply Eq.~\eqref{eq:fourier_coeff} to obtain:
\begin{equation}
    \label{eq:hopfield_case}
    - \frac{\mathrm{d}\mathcal{L}}{\mathrm{d}w_{ij}} = \frac{1}{T|\beta|} \int_{0}^{T} \sigma_{i}^{\ast}(t) \sigma_{j}^{\ast}(t) e^{-2 \mathrm{i} \pi t/T}\mathrm{d}t,
\end{equation}
where $\sigma^{\ast}_{i}(t) := \sigma(s^{\ast}_{i, \beta(t)})$.
Therefore, assuming a $T$-periodic teaching signal (Fig.~\ref{fig:setup}b), the gradient is proportional to the first exponential Fourier coefficient of the product of the oscillating activities.
Although this formulation assumes complex neuronal output, we show in Appendix~\ref{sec:app_real_imag} that the gradient in Eq.~\eqref{eq:hopfield_case} can be expressed in terms of the real part or imaginary part only.
The complex teaching signal is therefore best thought of as a way to produce unbiased neuronal oscillations on the nudging timescale.
In the next section, we numerically estimate the Fourier coefficient of Eq.~\eqref{eq:fourier_coeff} with a fixed number of points $N$ on the circle 
which we use to train networks in the subsequent experiments   
and to compare it to the actual loss gradient computed with automatic differentiation.

\paragraph{Numerical estimation of the loss gradient as a Fourier coefficient.}
\label{sec:estimators}

Next, we explain how to estimate the gradient from the corresponding Fourier coefficient (Eq.~\eqref{eq:fourier_coeff}).
In practice, we use a Riemann sum to compute the integral numerically.
We fix the nudging radius $|\beta|>0$ such that the circular path lies in the domain $U$ in which equilibria exist as described in Theorem~\ref{eq:heqprop_theorem}.
We sample the path with $N \geq 2$ nudging points $\{ \beta_{k} := |\beta| e^{2 \mathrm{i} \pi k/N} ~ ; ~ k \in [0, ..., N-1]\}$, and define the estimator:
\begin{equation}
    \label{eq:estimate}
    \hat{\nabla}(N) := \frac{1}{N|\beta|}\sum_{k=0}^{N-1} \frac{\partial F}{\partial \boldsymbol{\theta}}\left(\boldsymbol{\theta}, \mathbf{s}^{\ast}_{\beta_{k}},  \beta_{k}\right)e^{-2 \mathrm{i} \pi k/N}.
\end{equation}
We have that $\hat{\nabla}(N) \underset{N \to \infty}{\longrightarrow} \frac{\mathrm{d}\mathcal{L}}{\mathrm{d}\boldsymbol{\theta}}$, and the remaining bias term when using $N$ points is:
\begin{equation}
    \label{eq:quantitative}
    \hat{\nabla}(N) - \frac{\mathrm{d}\mathcal{L}}{\mathrm{d}\boldsymbol{\theta}} = \sum_{p \equiv 0 ~(N)}^{\infty}\frac{C_{p+1}|\beta|^{p}}{(p+1)!},
\end{equation}
where $C_{p}$ is the $p$-th derivative in $0$ of the function $\beta \mapsto \partial_{\boldsymbol{\theta}}F(\boldsymbol{\theta}, \mathbf{s}^{\ast}_{\beta}, \beta)$ (see Appendix~\ref{sec:app_proof_estimate} for the proof). 
The bias term in Eq.~\ref{eq:quantitative} converges to zero with increasing $N$ because it is a sub-sum of the $(N+1)$-th order remainder of the series expansion of $\beta \mapsto \partial_{\boldsymbol{\theta}}F(\boldsymbol{\theta}, \mathbf{s}^{\ast}_{\beta}, \beta)$ in 0.
The rate of convergence depends on the $C_{p}$ coefficients and the radius $|\beta|$.
In the case $N=2$, the estimate of Eq.~\eqref{eq:estimate} coincides with the `symmetric' estimate of \citet{laborieux2021scaling}.
However, the bias term on the right hand side of Eq.~\eqref{eq:quantitative} is only valid when the dynamics are holomorphic.
Next, we illustrate the approach in a toy experiment, and list three practical improvements brought by \ac{hEP}.

\section{Experiments}
\label{sec:experiments}

In all the experiments, we used the discrete setting of convergent recurrent neural networks of \cite{ernoult2019updates}, and the readout scheme of \cite{laborieux2021scaling} for optimizing the cross-entropy loss function with EP (Appendix~\ref{sec:app_detailed_archi}).
All simulations were implemented in Jax \cite{jax2018github} and Haiku \cite{haiku2020github} (Apache License~2.0).
The datasets were obtained from the Tensorflow datasets library \cite{tensorflow2015-whitepaper}.
Our code is publicly available on GitHub\footnote{\url{https://github.com/Laborieux-Axel/holomorphic_eqprop}}. 
The details of simulations and hyperparameters can be found in Appendix \ref{sec:app_hyperparameters} and \ref{sec:app_simulations}.

\paragraph{Demonstration of holomorphic Equilibrium Propagation on a single data point.}
To provide the first numerical validation of Theorem~\ref{eq:heqprop_theorem} while also allowing us to gain intuitions about dynamics of individual neurons, we implemented a small-scale \ac{MLP} with layer dimensions 6-4-4-4, including input and output layers.
The activation function was a shifted sigmoid, which is holomorphic (Appendix \ref{sec:app_detailed_archi}).
The network was fed with a single datapoint, namely a randomly sampled point from a Gaussian and a random one-hot target.
The dark blue region in Figure~\ref{fig:overview}b shows the map of complex $\beta$ for which the network settles to a fixed point after 200 time steps.
We found experimentally that the area in the complex plane where stable fixed points exist strongly depends on the activation function and the weight initialization (see Appendix \ref{sec:app_fractal}).
As $\beta$ evolves on the circle of radius 0.1 ($N=24$) hidden layer neurons settle into different equilibrium points in the complex plane (Fig.~\ref{fig:overview}a).
We observed that while the teaching signal $\beta(t) = |\beta|e^{2\mathrm{i}\pi t/T}$ was purely sinusoidal, the non-linearity of the network induces neural oscillations that are not purely sinusoidal.
Nevertheless, the gradient is contained in the first mode of these non-linear oscillations (Fig.~\ref{fig:overview}c).

\begin{figure}[tb]
  \centering
  \includegraphics[width=\textwidth]{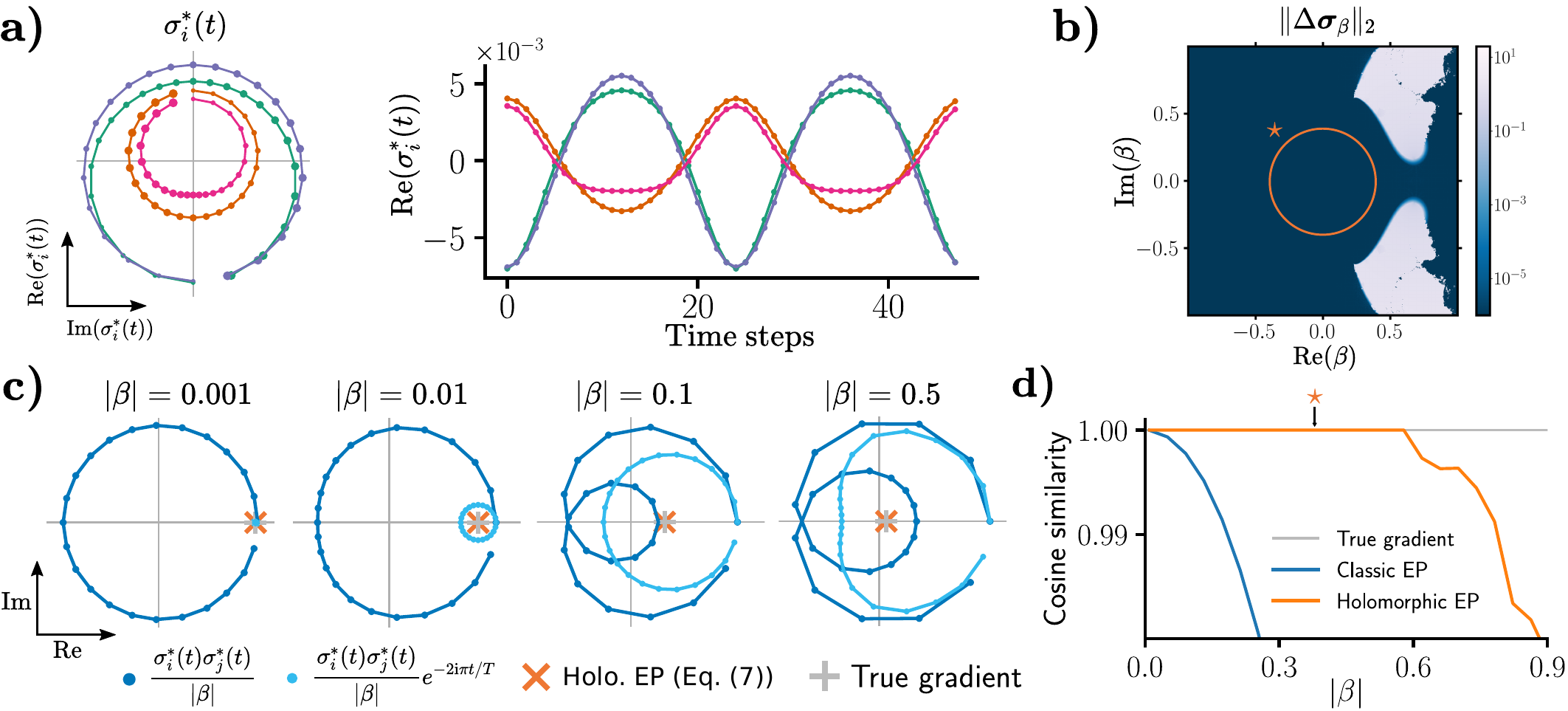}
  \caption{
  Overview of \ac{hEP} for a small \ac{MLP}.
  \textbf{a)}~Neural oscillations sampled at $N=24$ points in the complex plane relative to the free fixed-point in the center (left, point sizes increasing with time).
  The corresponding real part values over two periods (right). 
  \textbf{b)}~Map of the euclidean norm between two consecutive steps of the dynamics in the complex plane spanned by  $\beta$. The map describes the network's dynamical stability.
  Dark blue corresponds to regions with stable fixed points, while
  light blue indicates lack of stability.
  \textbf{c)}~Adiabatic correlations between two neurons for different $|\beta|$ values (dark blue dots).
  Filtering the mode over the period $T$ of the teaching oscillation gives the light blue dots, the temporal mean of which (orange cross) is the gradient (grey plus sign). 
  \textbf{d)}~Cosine similarity with the true gradient of \ac{hEP} (orange), and classic EP \cite{scellier2017equilibrium} (blue).
  The $\star$ marks the teaching radius $|\beta|$ of the path in panel b).
  \Ac{hEP} breaks down when the path passes through unstable (light) regions.
  }
  \label{fig:overview}
\end{figure}

To understand how the gradient is computed accurately when the magnitude of the teaching signal is increased, we recorded the adiabatic product of activities $\sigma_{i}^{\ast}(t) \sigma_{j}^{\ast}(t)$ for one pair of neurons (dark blue) over one teaching period and for increasing values of $|\beta|$ (Fig.~\ref{fig:overview}c).
In the case $|\beta|=0.001$, the perturbation induced by the sinusoidal teaching signal is also purely sinusoidal, and the gradient magnitude is simply the radius of the circle.
However, when $|\beta|$ is increased to 0.01, the linear approximation of the perturbation becomes less accurate, because higher powers of $\beta$ become significant in the series expansion around the free fixed point.
The gradient could still be well approximated by taking the mean of the two radii corresponding to real positive and negative $\beta$, as done by \citet{laborieux2021scaling}.
Increasing $|\beta|$ further to 0.1 and 0.5 yields an even more deformed perturbation, but the gradient is still correctly contained in the first Fourier coefficient of the perturbation.
\Ac{hEP} breaks down when $\beta$ reaches amplitudes for which the corresponding path intersects with areas in which no stable equilibrium exists (cf.~Fig.~\ref{fig:overview}b, light areas, and Fig.~\ref{fig:overview}d).
However, as we will see in the next sections, the finite teaching amplitudes are beneficial when the neuronal dynamics are subject to noise and when training deep neural networks.

\begin{figure}[tbh]
  \centering
  \includegraphics[width=\textwidth]{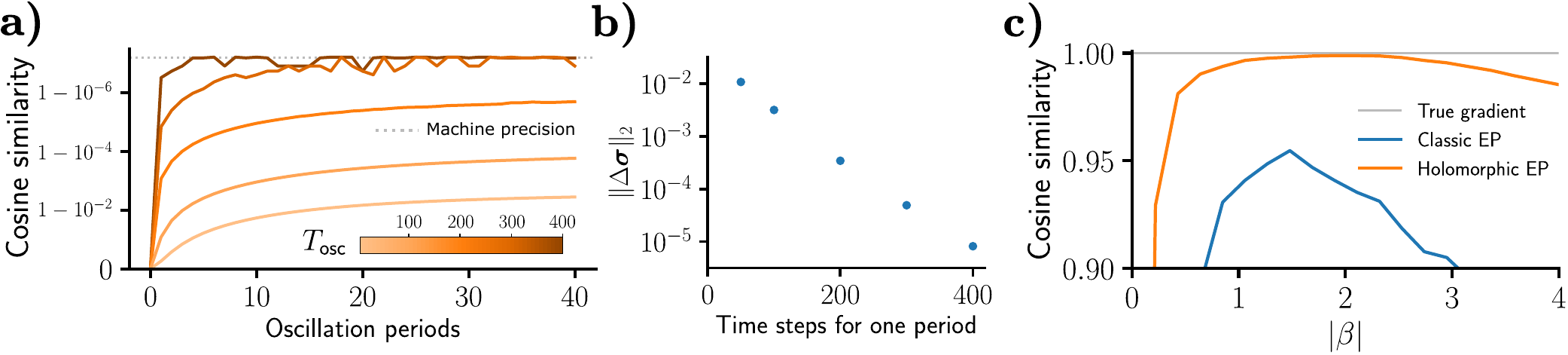}
  \caption{
  \textbf{a)}~Cosine similarity between the true gradient obtained with BP through time and the online estimate ($N=10$) as a function of oscillation periods. 
  Different curves correspond to different oscillation periods (darker color indicates larger $T_{\text{osc}}$).
  $T_{\text{plas}} \approx 10T_{\text{osc}}$ is enough to accurately estimate the gradient. 
  \textbf{b)}~Measure of the residual convergence of the network as a function of the oscillation period $T_{\text{osc}}$, showing that $T_{\text{dyn}} \approx 400/10=40$ time steps.
  \textbf{c)}~Cosine similarity of \ac{hEP} ($N=15$) and classic EP with the true gradient as a function of $|\beta|$ with neuronal output noise.
  All panels used the same \ac{MLP} with two hidden layers of 256 neurons each, fed with a minibatch of ten MNIST samples. 
  }
  \label{fig:cont_noise}
\end{figure}

\paragraph{Holomorphic EP can estimate the gradient in continuous time.}
Our theoretical findings allow us to revisit in a principled way an idea introduced by \citet{baldi1991contrastive} for a continuous-time-implementation of contrastive Hebbian learning \cite{movellan1991contrastive} and learning rules looking to maximize slowness \citep{halvagal2022combination, wiskott2002slow, lipshutz2020biologically}. 
In this context, the oscillating teaching signal is always-on, and the dynamical network is governed by three mechanisms acting on distinct timescales.
The smallest timescale $T_{\text{dyn}}$ is the typical time needed by the network to reach its fixed point.
The second timescale is the period of one teaching oscillation $T_{\text{osc}}$, and the third timescale is the number of periods after which synaptic plasticity occurs $T_{\text{plas}}$.
The gradient can be estimated online by:
\begin{equation}
    \label{eq:continuous}
    \widetilde{\nabla}(T_{\text{plas}}) := -\frac{1}{T_{\text{plas}}|\beta|} \int_{0}^{T_{\text{plas}}} \sigma_{i}(t) \sigma_{j}(t) e^{-2 \mathrm{i} \pi t/T_{\text{osc}}}\mathrm{d}t,
\end{equation}
which converges to the gradient if $T_{\text{dyn}} \ll T_{\text{osc}} \ll T_{\text{plas}}$ (see Appendix \ref{sec:app_proof_continuous}).
This expression is similar to Eq.~(5.2) in \cite{baldi1991contrastive}. 
However their teaching signal oscillates discretely between~0 and~1, and therefore produces a biased estimate of the gradient.
To test the influence of the oscillation timescale $T_{\text{osc}}$ on the online estimate of Eq.~\eqref{eq:continuous}, we compared the online estimation of the gradient over several periods between several values of $T_{\text{osc}}$.
To this end, we used a \ac{MLP} with two hidden layers with 256 units each, which we fed with a minibatch of MNIST data \cite{deng2012mnist}.
We observed that the gradient could be accurately estimated in a few periods for high enough $T_{\text{osc}}$ (Fig.~\ref{fig:cont_noise}a, dark curves).
However, when the oscillations were too fast, a non-vanishing bias remained in the gradient estimates even for many periods (Fig.~\ref{fig:cont_noise}a, lighter curves).
This bias is in all likelihood due to the inability of the system to reach the fixed point (Fig.~\ref{fig:cont_noise}b).
Finally, we found that given appropriate period timings, \ac{hEP} used in the online setting can train a network on MNIST (Table \ref{tab:cont_noise}).
Importantly, the online formulation of \ac{hEP} allows to dispense with the requirement of strictly separate learning phases by replacing them with separate plasticity mechanisms acting on different timescales.

\begin{table}[tbh]
  \caption{MNIST validation errors in \% for classic EP \cite{scellier2017equilibrium}, \ac{hEP}, and online \ac{hEP}, with and without noise. 
  Results are averages ($n=3$) $\pm$ stddev. 
  For training errors see Table~\ref{tab:app_cont_noise} in Appendix~\ref{sec:app_hyperparameters}.}
  \label{tab:cont_noise}
  \centering
  \begin{tabular}{ccccc}
    \toprule
    Noise & Class. EP, $|\beta| = 0.1$ & Class. EP, $|\beta| = 0.4$ & \ac{hEP}, $|\beta| = 0.4$ & Online \ac{hEP} \\
    \midrule
    Noise-free   & 1.87 $\pm$ 0.01 & 2.24 $\pm$ 0.05 & 1.97 $\pm$ 0.08 & 2.05 $\pm$ 0.02 \\
    With noise  & 88.7 $\pm$ 0.0 & 3.01 $\pm$ 0.1 & 1.96 $\pm$ 0.07  & 1.91 $\pm$ 0.16 \\
    \bottomrule
  \end{tabular}
\end{table}

\paragraph{Finite size teaching oscillations provide robustness to noise.}
To analyze \ac{hEP}'s robustness to noise,
we injected a small-amplitude zero-mean Gaussian noise to each neuron in the network in addition to the input from other neurons.
We then used a single minibatch from the MNIST dataset to compute gradient estimates using classic EP and \ac{hEP}.
The latter was computed by using one realisation using $N=15$ points, whereas the classic EP estimate was computed using the free and nudged fixed points  each averaged over $\lceil N/2 \rceil$ to provide a fair comparison.
We found that for small $\beta$ when noise amplitudes were comparable to the activity changes caused by teaching oscillations  the gradient estimate diverged from the true gradient of the noise-free system (Fig.~\ref{fig:cont_noise}c).
To some extent this effect could be mitigated by choosing a finite teaching signal ($\beta\gg0$) \cite{zucchet2021contrastive}. 
However, since $\beta$ also increases the bias for classic EP this creates a trade-off between choosing $\beta$ either too small or too large. 
Importantly, even for the optimal choice of $\beta$, classic EP did not accurately approximate the gradient of the noise-free system.
In contrast, \ac{hEP} thanks to its robustness to finite teaching signals did provide an accurate estimate of the gradient despite the noise.
We verified that \ac{hEP} is indeed more robust to noise than classic EP when used to train the network (Table \ref{tab:cont_noise}). 
Thus, \ac{hEP} combined with finite teaching amplitudes provides an effective way for training noisy computational substrates. 

\begin{figure}
  \centering
  \includegraphics[width=\textwidth]{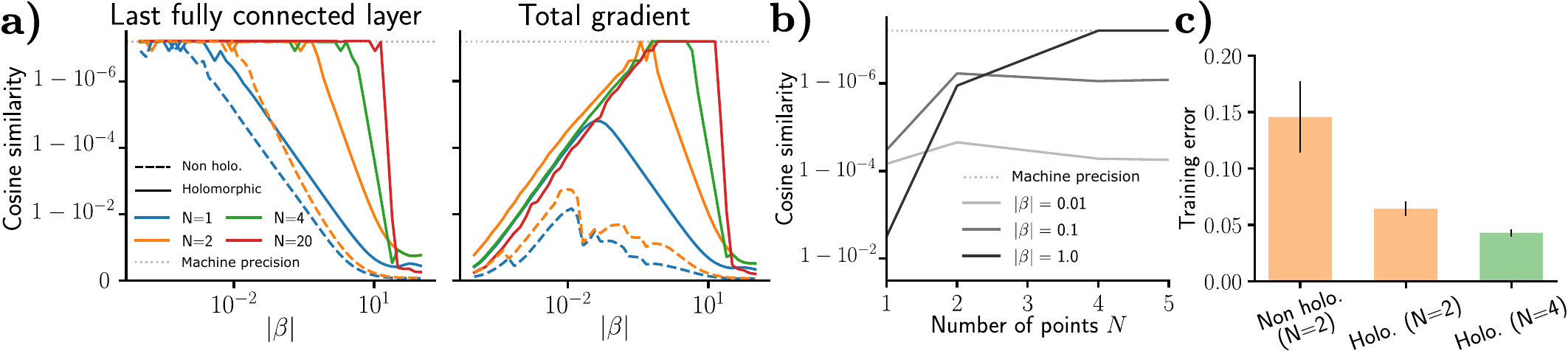}
  \caption{
  \textbf{a)} Cosine similarity between \ac{hEP} and the true gradient in a holomorphic (plain lines) seven-layer VGG-like \ac{CNN} \cite{simonyan2014very}, and a non-holomorphic version using max pooling and \acp{ReLU} (dashed lines).
  Input data is a minibatch of ImageNet $32 \times 32$ \cite{chrabaszcz2017downsampled} consisting of 10 images.
  The left plot shows a comparison of the gradient with respect to the parameters of network's output layer.
  The right plot takes into account the gradient with respect to all network parameters.
  \textbf{b)} Cosine similarity in function of $N$ for three teaching amplitude values. 
  Increasing $N$ is only required for higher amplitudes.
  \textbf{c)} Average training error on CIFAR-10. 
  The average is calculated over three random initializations and error bars correspond to two standard deviations.
  }
  \label{fig:sweep_beta}
\end{figure}

\paragraph{Holomorphic EP matches BP performance on large-scale vision benchmarks.}
To test \ac{hEP}'s ability to train deep neural networks,  
we first investigated  the influence of the number of fixed points $N$ and the teaching amplitude $|\beta|$ on the approximating quality $\hat{\nabla}(N)$ of the loss gradient in a seven-layer VGG-like architecture (\cite{simonyan2014very}; Fig.~\ref{fig:sweep_beta}a,b). 
We ensured holomorphic dynamics by using softmax pooling layers \cite{stergiou2101refining} instead of the non-holomorphic max pooling, and by relying on sigmoid weighted linear units (dSiLU) \cite{elfwing2018sigmoid} (see Appendix \ref{sec:app_detailed_archi}) instead of standard \acp{ReLU}.
We first considered a minibatch of ten images from the ImageNet $32 \times 32$ dataset \cite{chrabaszcz2017downsampled} and computed the gradient using \ac{hEP} as well as BPTT for reference. 
Here, our estimate (Eq.~\eqref{eq:estimate}) was computed by first letting the network settle to the free fixed ($\beta=0$), and then running the phases with complex $\beta$.
We found that the change to holomorphic dynamics already improved upon the gradient estimates used in previous work  \cite{laborieux2021scaling, luczak2022neurons}.
Moreover, we observed that for the last layer increasing $N$ only extended the range of usable teaching magnitudes $|\beta|$, but did not improve the quality of the gradient estimate.
This phenomenon can be understood from Eq.~\eqref{eq:quantitative}, since higher $N$ reduces the bias term considerably, which accommodates higher teaching magnitude $|\beta|$.
However, larger amplitudes tended to improve the total gradient, particularly in deep layers where 
small teaching magnitudes were not enough to produce sufficient error signals 
(see Appendix \ref{sec:app_sweep_beta} for details).
Additionally, larger $N$ were only required when using a higher teaching amplitude (Fig.~\ref{fig:sweep_beta}b).
Finally, we tested how the gradient quality impacts the network training accuracy on CIFAR-10.
We observed that the non-holomorphic VGG was unable to reach low training error (Fig.~\ref{fig:sweep_beta}c), which is consistent with the poor gradient quality (Fig~\ref{fig:sweep_beta}a).
Changing to a holomorphic architecture with the same number of points resulted in a substantial improvement of training accuracy, which was further boosted when training with $N=4$ consistent with our theory.

\begin{table}[tbh]
  \caption{Validation accuracy of BP and \ac{hEP}. 
  All values are averages ($n=3$) $\pm$ stddev.}
  \label{tab:perf}
  \centering
  \begin{tabular}{llllll}
    \toprule
    & CIFAR-10 &  \multicolumn{2}{c}{CIFAR-100} & \multicolumn{2}{c}{ImageNet $32 \times 32$} \\
    \cmidrule(r){2-6}
         & Top-1 (\%)    & Top-1 (\%)  & Top-5 (\%)  & Top-1 (\%) & Top-5 (\%)\\
    \midrule
    BP  &  88.3 $\pm$ 0.1 & 62.0 $\pm$ 0.5  & 86.2 $\pm$ 0.1 & 37.2 $\pm$ 0.4  & 60.9 $\pm$ 0.1 \\
    \ac{hEP}    &  88.6 $\pm$ 0.2  &  61.6 $\pm$ 0.1 & 86.0 $\pm$ 0.1 & 36.5 $\pm$ 0.3 & 60.8 $\pm$ 0.4\\
    \bottomrule
  \end{tabular}
\end{table}

Finally, we wondered whether \ac{hEP} could train deep neural networks on large-scale datasets, which has remained an open problem for most if not all alternative algorithms to BP \cite{bartunov2018assessing}.
To this end, we trained a five-layer \ac{CNN} based on the VGG architecture \cite{simonyan2014very} on multiple vision benchmarks including CIFAR-10, CIFAR-100 \cite{Krizhevsky09learningmultiple}, and the $32 \times 32$ pixel version of ImageNet \cite{chrabaszcz2017downsampled}, which contains 1.2 million data points and 1000 classes like the full ILSVRC dataset \cite{ILSVRC15}.
In all cases we found that the validation accuracy reached by networks trained with BP and \ac{hEP} using only two fixed points ($N=2$) were identical within their uncertainties (Table~\ref{tab:perf}).
Note that the networks trained with BP were the feed-forward equivalent of the holomorphic networks used with EP, but with \acp{ReLU} instead of dSiLU, which did not give satisfactory results.
Thus, \ac{hEP} permits training deep \acp{CNN} on ImageNet $32 \times 32$ to comparable performance levels as standard BP.

\section{Discussion}
\label{sec:discussion}

We have introduced \ac{hEP} which extends classic EP by computing exact loss gradients through integration over finite size adiabatic neuronal oscillations caused by a teaching signal (Section \ref{sec:theory}).
Importantly, such integration can be accomplished online with purely local learning rules which makes it an exciting theoretical framework for studying learning in the brain where oscillations are ubiquitously observed \cite{fell2011role, engel2001dynamic}.
In practice we found that numerically evaluating a small number of points during one oscillation cycle provides an excellent gradient approximation that outperforms classic EP and thanks to the finite oscillation amplitude is robust to noise,
which is an advantage for training neuromorphic hardware systems \cite{kendall2020training, zoppo2020equilibrium, stern2021supervised}.
Additionally, the possibility of using finite teaching signals is conducive for training deep \acp{CNN}, where infinitesimal teaching signals as used by classic EP, may vanish (Section \ref{sec:experiments}).

A body of previous work has attempted to reconcile BP with neurobiology \citep{lillicrap2020backpropagation, payeur2021burst}.
EP is most closely related to classic theories of predictive coding (PC) \cite{rao1999predictive, keller2018predictive, huang2011predictive, whittington2017approximation, millidge2020predictive, rosenbaum2022relationship,friston2010free} which similarly assumes convergent network dynamics cast into an energy minimization problem. 
PC further assumes that errors are encoded in neuronal dynamics, dedicated dendritic compartments, or separate temporal phases \cite{keller2018predictive, whittington2019theories, sacramento2018dendritic, haider2021latent}.
In a similar vein, Target Propagation (TP) \citep{bengio2014auto, lee2015difference, ernoult2022towards} assumes locally encoded error signals which in some cases are obtained by iterating approximate inverses, a property reminiscent of EP \cite{ahmad2020gait, podlaski2020biological, bengio2020deriving} which comes with theoretical guarantees \cite{meulemanS2020theoretical}.
However, all previous EP studies and most of the works above, with two notable exceptions \citep{ernoult2022towards, payeur2021burst}, were all limited to small-scale problems \citep{bartunov2018assessing}.
In contrast, we demonstrate in this article that \ac{hEP} scales to ImageNet $32 \times 32$.

While the ability to run \ac{hEP} online makes it an appealing model for credit assignment in biological neural networks, this interpretation has several notable shortcomings.
First, \ac{hEP} requires complex-valued neuronal outputs and a holomorphic dynamical system which precludes the use of max pooling and \acp{ReLU} and hampers a direct comparison to neurobiology. 
However, we found that holomorphic alternatives exists which empirically yield comparable performance.
Moreover, complex outputs have a long-standing tradition in computational neuroscience where they appear in variations of Hopfield networks \cite{hirose1992dynamics, jankowski1996complex, frady2019robust}, in the framework of theta neurons \cite{ermentrout1986parabolic}, and phasor networks \cite{bybee2022deep} where they are used to describe oscillatory neuronal dynamics.
It is possible to interpret \ac{hEP} within such frameworks.
For instance, it is straightforward to interpret complex neuronal output as oscillating activity with a defined amplitude and relative phase to some reference signal accessible to the entire neuronal population. 
Such a signal could be provided by neuromodulators such as acetylcholine which has been implicated in neural oscillations \citep{hasselmo_role_2006}.
Within our framework, the oscillatory teaching signal then corresponds to a slow phase precession between the neuronal activity and the reference.
Importantly, such a mechanism implies a hierarchy of oscillation frequencies.
Such different oscillations are known to exist in the brain, e.g., theta (4--8\,Hz) and gamma (30--70\,Hz), but their precise purpose remains elusive. 
While establishing formal circuit-level equivalences with \ac{hEP} will require future work, the principled link between oscillatory activity, learning and memory as developed in this article seems promising.
Like other algorithms \ac{hEP} requires symmetric synaptic connectivity between layers which seems biologically implausible.
While theoretical guarantees for exact gradient computation are lost without strict symmetry \cite{scellier2018generalization}, it may not be required for learning \cite{lillicrap2016random, launay2020direct, nokland2016direct}. 
Alternatively, symmetry may be acquired through plastic feedback connections \citep{akrout2019deep, amit2019deep}. 
Although our approach neither uses time-varying input nor neuronal spiking dynamics, spiking extensions to EP have been proposed \cite{o2019training, martin2021eqspike}.
However, applying our theory to time-varying tasks requiring memory will require additional architectural modifications and theoretical concepts \cite{kendall2021gradient} and establishing links to present spike-based approaches \citep{zenke2018superspike, bellec2020solution, payeur2021burst, kaiser2020synaptic}.

Our work augments classic EP with desirable properties for potential neuromorphic applications, which promise power-efficient and equitable \ac{AI} at the edge and in IoT devices \citep{mead1990neuromorphic, indiveri2011neuromorphic, schuman2017survey, grollier2016spintronic}. 
While current software implementation of EP are generally slow compared to backprop, its appeal lies in its potential for training physical networks on future neuromorphic mixed-signal devices that are incompatible with backprop, but achieve settling times on the order of nanoseconds \citep{kendall2020training,stern2021supervised}.
As exciting as such developments are, they also risk negative societal impacts, e.g., through mass surveillance or allowing \ac{AI} systems with potentially discriminatory biases to permeate our everyday lives further. 
A transparent research strategy and taking into account ethical considerations early during product design will be essential to avoid such adverse outcomes. 
On the upside, our theoretical work further consolidates EP as a conceptual framework for understanding the brain, a fundamental requirement to inform future biomedical research targeted at nervous system disorders.

\section*{Acknowledgements}

We thank all members of the Zenke Group for comments and discussions.
This project was supported by the Swiss National Science Foundation [grant number PCEFP3 202981] and the Novartis Research Foundation.

\bibliographystyle{unsrtnat}

\newpage

\appendix

\section{Theoretical proofs}
\label{sec:app_proofs}

In this appendix we detail the proofs of the theoretical results in the body text.

\subsection{Complex analysis background}
\label{sec:app_theory_background}

We recall here the minimal complex analysis background required to appreciate the theoretical results of this work.
In the following, we recall the definitions of holomorphic and Wirtinger derivatives, the Cauchy-Riemann equations and the Cauchy formulas.
We refer the reader to Chapter~4 of [\hyperlink{appel}{S1}] for proofs as well as an excellent introduction to complex analysis.

\paragraph{Definition 1}(Holomorphic function)\textbf{.}
\textit{Let $U$ be an open set of $\mathbb{C}$ and $f: z \in U \mapsto f(z) \in \mathbb{C}$ a function.
$f$ is holomorphic at $a \in U$ if the limit}
\begin{equation*}
    \lim_{z \to a} \frac{f(z)-f(a)}{z-a}
\end{equation*}
\textit{exists.
This limit is then noted $f^{\prime}(a)$.
$f$ is holomorphic on $U$ if it is holomorphic $\forall a \in U$.}

Though this definition looks like the definition of differentiability in $\mathbb{R}$, it brings constraints on the underlying function $\tilde{f}: (x,y) \in \mathbb{R}^{2} \mapsto (\operatorname{Re}(f(x+\mathrm{i}y), \operatorname{Im}(f(x+\mathrm{i}y))$.
The added constraints are the Cauchy-Riemann equations, which can be compactly written after defining Wirtinger derivatives:
\paragraph{Definition 2}(Wirtinger derivatives)\textbf{.}
\textit{Noting $\partial / \partial x$ and $\partial / \partial y$ the usual partial derivatives in $\mathbb{R}^{2}$, the Wirtinger derivatives are defined by:}
\begin{eqnarray*}
    \frac{\partial}{\partial z} := \frac{1}{2}\left(\frac{\partial}{\partial x} - \mathrm{i} \frac{\partial}{\partial y}\right), && \frac{\partial}{\partial \bar{z}} := \frac{1}{2}\left(\frac{\partial}{\partial x} + \mathrm{i} \frac{\partial}{\partial y}\right).
\end{eqnarray*}
In this way, $z$ and its complex conjugate $\bar{z}$ can be thought of as independent variables.
We can then state the Cauchy-Riemann equations as:
\paragraph{Theorem 2}(Cauchy-Riemann equations)\textbf{.}
\textit{if $f$ is holomorphic at $a \in U$, then:}
\begin{eqnarray}
    \label{eq:cauchy_riemann}
    \frac{\partial f}{\partial z}(a) = f^{\prime}(a), && 
    \frac{\partial f}{\partial \bar{z}}(a) = 0.
\end{eqnarray}
These constraints ensure that $f$ is locally expandable everywhere in $U$ into a converging power series.
In particular, it is differentiable at any order and the derivatives can be computed with the:
\paragraph{Theorem 3}(Cauchy formulas)\textbf{.}
\textit{Let $f$ be holomorphic on $U$, let $\gamma$ be any piece-wise continuously differentiable closed curve in $U$ going around $a \in U$ once and counterclockwise, then:}
\begin{equation}
    \label{eq:cauchy_formulas}
    f^{(n)}(a) = \frac{n!}{2 \mathrm{i} \pi} \oint_{\gamma} \frac{f(z)}{(z-a)^{n+1}}\mathrm{d}z.
\end{equation}

\subsection{Proof of Lemma~\ref{lemma:hEP}}
\label{sec:app_proof_lemma}

Here, we give a more detailed proof of holomorphic EP.
We recall the:

\paragraph{Lemma 1}(Holomorphic Equilibrium Propagation)\textbf{.}
\textit{Let~$F$ be a scalar function governing the dynamics, so that the holomorphic implicit function theorem can be applied to the fixed point equation $\partial_{\mathbf{s}}F(\boldsymbol{\theta}, \mathbf{s}^{\ast}_{0}, 0) = 0$, then the gradient formula of equilibrium propagation (Eq.~\eqref{eq:origEP}) holds in the sense of complex differentiation.}

\begin{proof}
We first detail precisely the set of equations on which the holomorphic implicit function theorem is applied.
At the free fixed point $(\boldsymbol{\theta}=\boldsymbol{\theta}_{0}, \beta=0)$ that which exists by assumption, we have the following set of equations:
\begin{equation*}
    \frac{\partial F}{\partial s_{j}}(\boldsymbol{\theta}_{0}, \mathbf{s}_{0}^{\ast}, 0) = 0, \quad 1 \leq j \leq n,
\end{equation*}
where $n$ is the number of units in the system.
The functions $\partial_{s_{j}}F$ are holomorphic by assumption.
If we further assume that the Hessian of $F$ with respect to $\mathbf{s}$ is invertible in $(\boldsymbol{\theta}_{0}, \mathbf{s}^{\ast}_{0}, 0)$, i.e.:
\begin{equation*}
    \det \left( \frac{\partial^{2} F}{\partial s_{i} \partial s_{j}} (\boldsymbol{\theta}_{0}, \mathbf{s}_{0}^{\ast}, 0) \right)_{i,j} \neq 0,
\end{equation*}
then the holomorphic version of the implicit function theorem [\hyperlink{cartan}{S2}] can be applied and there exists an open neighbourhood of $(\boldsymbol{\theta}=\boldsymbol{\theta}_{0}, \beta=0)$ in the complex domain where the implicit map $(\boldsymbol{\theta}, \beta) \mapsto \mathbf{s}^{\ast}_{\boldsymbol{\theta}, \beta}$ is holomorphic, and where the fixed point equations hold:
\begin{equation*}
    \frac{\partial F}{\partial \mathbf{s}}(\boldsymbol{\theta}, \mathbf{s}_{\boldsymbol{\theta}, \beta}^{\ast}, \beta) = 0.
\end{equation*}
At such fixed points, we have that the total derivatives of $F$ with respect to either $\beta$ or $\boldsymbol{\theta}$ are equal to the partial derivatives, which can be seen by applying the chain rule of complex differentiation using Wirtinger derivatives.
There are now in principle three contributions to the total derivative of $F$ with respect to $\beta$:
\begin{equation}
    \label{eq:key_ing_proof}
    \frac{\mathrm{d}F}{\mathrm{d}\beta}(\boldsymbol{\theta}, \mathbf{s}_{\boldsymbol{\theta}, \beta}, \beta) = \frac{\partial F}{\partial \beta}(\boldsymbol{\theta}, \mathbf{s}_{\boldsymbol{\theta}, \beta}, \beta) + \underbrace{\frac{\partial F}{\partial \mathbf{s}}}_{\mathrlap{=0~\text{at a fixed point}}} \cdot \frac{\partial \mathbf{s}}{\partial \beta}(\boldsymbol{\theta}, \beta) + \underbrace{\frac{\partial F}{\partial \mathbf{\overline{s}}}}_{\mathrlap{=0~\text{by Cauchy-Riemann (Eq.~\eqref{eq:cauchy_riemann})}}} \cdot \frac{\partial \mathbf{\overline{s}}}{\partial \beta}(\boldsymbol{\theta}, \beta),
\end{equation}
where $\mathbf{\overline{s}}$ denotes the complex conjugate of $\mathbf{s}$.
At the fixed point however, the second term on the right hand side cancels by definition.
The third term is zero because $F$ is holomorphic, i.e., its derivative with respect to the conjugate variable $\mathbf{\overline{s}}$ is zero according to the Cauchy-Riemann condition~[\hyperlink{appel}{S1}].
The same argument holds for the total derivative with respect to $\boldsymbol{\theta}$.

Finally, the cross-derivatives of $F$ with respect to complex $\beta$ and $\boldsymbol{\theta}$ can be exchanged, which is a consequence of the Schwarz theorem applied to the function $(\boldsymbol{\theta}, \beta) \mapsto G(\boldsymbol{\theta}, \beta) := F(\boldsymbol{\theta}, \mathbf{s}^{\ast}_{\boldsymbol{\theta}, \beta}, \beta)$.
Therefore we have that:
\begin{align*}
    \frac{\partial^{2} G}{ \partial \beta \partial \boldsymbol{\theta}}(\boldsymbol{\theta}, \beta) &= \frac{\partial^{2} G}{\partial \boldsymbol{\theta} \partial \beta}(\boldsymbol{\theta}, \beta), \\
    \frac{\mathrm{d}}{\mathrm{d}\beta} \frac{\mathrm{d}}{\mathrm{d}\boldsymbol{\theta}} F(\boldsymbol{\theta}, \mathbf{s}^{\ast}_{\boldsymbol{\theta}, \beta}, \beta) &= \frac{\mathrm{d}}{\mathrm{d}\boldsymbol{\theta}} \frac{\mathrm{d}}{\mathrm{d}\beta} F(\boldsymbol{\theta}, \mathbf{s}^{\ast}_{\boldsymbol{\theta}, \beta}, \beta), \\
    \frac{\mathrm{d}}{\mathrm{d}\beta} \frac{\partial}{\partial \boldsymbol{\theta}} F(\boldsymbol{\theta}, \mathbf{s}^{\ast}_{\boldsymbol{\theta}, \beta}, \beta) &= \frac{\mathrm{d}}{\mathrm{d}\boldsymbol{\theta}} \frac{\partial}{\partial \beta} F(\boldsymbol{\theta}, \mathbf{s}^{\ast}_{\boldsymbol{\theta}, \beta}, \beta), \quad \text{by Eq.~\eqref{eq:key_ing_proof}.}
\end{align*}
By then applying this equality in $\beta=0$ and $\boldsymbol{\theta} = \boldsymbol{\theta}_{0}$, we obtain the EP gradient formula (Eq.~\eqref{eq:origEP}) for complex differentiation:
\begin{equation*}
    \left. \frac{\mathrm{d}}{\mathrm{d}\beta} \right|_{\beta=0} \left( \frac{\partial F}{\partial \boldsymbol{\theta}} (\boldsymbol{\theta}, \mathbf{s}^{\ast}_{\boldsymbol{\theta}, \beta}, \beta) \right) =  \frac{\mathrm{d}}{\mathrm{d}\boldsymbol{\theta}} \frac{\partial F}{\partial\beta} (\boldsymbol{\theta}, \mathbf{s}^{\ast}_{\boldsymbol{\theta}, \beta}, \beta) = \frac{\mathrm{d}\mathcal{L}}{\mathrm{d}\boldsymbol{\theta}},
\end{equation*}
which concludes the proof.
\end{proof}

\subsection{Proof of Theorem~\ref{eq:heqprop_theorem}}
\label{sec:app_proof_theorem}

\paragraph{Theorem 1}(Exact gradient from finite teaching signals)\textbf{.}
\textit{Assuming that the conditions of Lemma~\ref{lemma:hEP} are met and let $|\beta|>0$ be the radius of a circular path around 0 in $\mathbb{C}$ contained in the open set $U$ on which the fixed point $\mathbf{s}^{\ast}_{\boldsymbol{\theta}, \beta}$ is defined. 
Further assume that this path is parameterized by $t \in [0,T] \mapsto \beta(t) = |\beta|e^{2 \mathrm{i} \pi t/T}$, where $\mathrm{i}$ is the imaginary unit.
Then the loss gradient is given by:}
\begin{equation}
    \label{eq:fourier_coeff_app}
    \frac{\mathrm{d}\mathcal{L}}{\mathrm{d}\boldsymbol{\theta}} = \frac{1}{T|\beta|} \int_{0}^{T} \frac{\partial F}{\partial \boldsymbol{\theta}}\left(\boldsymbol{\theta}, \mathbf{s}^{\ast}_{\boldsymbol{\theta}, \beta(t)}, \beta(t)\right)e^{-2 \mathrm{i} \pi t/T}\mathrm{d}t \quad .
\end{equation}

\begin{proof}
By assumption the fixed point $\beta \mapsto \mathbf{s}^{\ast}_{\boldsymbol{\theta}, \beta}$ is defined on an open set $U$ (by the holomorphic implicit function theorem) containing the disk of radius $|\beta|$ centered around 0. 
In particular, the function $\beta \in U \mapsto \partial_{\boldsymbol{\theta}}F(\boldsymbol{\theta}, \mathbf{s}^{\ast}_{\boldsymbol{\theta}, \beta}, \beta)$, is also holomorphic by composition.
The left hand side of Eq.~\eqref{eq:proof} can thus be computed with the Cauchy formulas (Eq.~\eqref{eq:cauchy_formulas} with $f=\partial_{\boldsymbol{\theta}}F$, $n=1$, $a=0$), and $\gamma$  an arbitrary closed path leading around zero once and counterclockwise in $U$:
\begin{equation}
    \label{eq:cauchy}
    \left. \frac{\mathrm{d}}{\mathrm{d}\beta} \right|_{\beta=0} \left( \frac{\partial F}{\partial \boldsymbol{\theta}} (\boldsymbol{\theta}, \mathbf{s}^{\ast}_{\boldsymbol{\theta}, \beta}, \beta) \right) = \frac{1}{2 \mathrm{i} \pi} \oint_{\gamma} \frac{1}{\beta^{2}}\frac{\partial F}{\partial \boldsymbol{\theta}}(\boldsymbol{\theta}, \mathbf{s}^{\ast}_{\boldsymbol{\theta}, \beta}, \beta)\mathrm{d}\beta \quad .
\end{equation}
To obtain Eq.~\eqref{eq:fourier_coeff_app}, we choose $\gamma$ as a circular path in the complex plane with radius $|\beta|>0$ parameterized by time $t \in [0,T] \mapsto \beta(t) = |\beta|e^{2 \mathrm{i} \pi t/T}$, where $T$ is a full period.
After the change of variable $\mathrm{d}\beta = (\nicefrac{2 \mathrm{i} \pi \beta(t) }{T})\mathrm{d}t$ in Eq.~\eqref{eq:cauchy},  and using  Lemma~\ref{lemma:hEP}, the loss gradient is given by:
\begin{align*}
    \frac{\mathrm{d}\mathcal{L}}{\mathrm{d}\boldsymbol{\theta}} &= \frac{1}{2 \mathrm{i} \pi} \oint_{\gamma} \frac{1}{\beta^{2}}\frac{\partial F}{\partial \boldsymbol{\theta}}(\boldsymbol{\theta}, \mathbf{s}^{\ast}_{\boldsymbol{\theta}, \beta}, \beta)\mathrm{d}\beta \\
    &= \frac{1}{2 \mathrm{i} \pi} \int_{0}^{T} \frac{1}{\beta(t)^{2}}\frac{\partial F}{\partial \boldsymbol{\theta}}(\boldsymbol{\theta}, \mathbf{s}^{\ast}_{\boldsymbol{\theta}, \beta(t)}, \beta(t)) \left(\frac{2 \mathrm{i} \pi \beta(t) }{T}\right)\mathrm{d}t \\
    &= \frac{1}{T} \int_{0}^{T} \frac{1}{\beta(t)} \frac{\partial F}{\partial \boldsymbol{\theta}}\left(\boldsymbol{\theta}, \mathbf{s}^{\ast}_{\boldsymbol{\theta}, \beta(t)}, \beta(t)\right)\mathrm{d}t \\
    &= \frac{1}{T|\beta|} \int_{0}^{T} \frac{\partial F}{\partial \boldsymbol{\theta}}\left(\boldsymbol{\theta}, \mathbf{s}^{\ast}_{\boldsymbol{\theta}, \beta(t)}, \beta(t)\right)e^{-2 \mathrm{i} \pi t/T}\mathrm{d}t.
\end{align*}
\end{proof}

\subsection{Roles of real and imaginary parts in the learning rule}
\label{sec:app_real_imag}

Recall that for the continuous Hopfield network case the partial derivative of $F$ with respect to a parameter $w_{ij}$ is the product of pre and post activation (Eq.~\eqref{eq:tot_ene_hopfield}), so that applying Eq.~\eqref{eq:proof} yields:
\begin{equation*}
    \frac{\mathrm{d}\mathcal{L}}{\mathrm{d}w_{ij}} = \left. \frac{\mathrm{d}}{\mathrm{d}\beta} \right|_{\beta=0} \underbrace{\left( \frac{\partial F}{\partial w_{ij}} (\boldsymbol{\theta}, \mathbf{s}^{\ast}_{\beta}, \beta) \right)}_{= -\sigma(s_{i, \beta}^{\ast})\sigma(s_{j, \beta}^{\ast})} =  - \left. \frac{\mathrm{d}\left(\sigma(s_{i, \beta}^{\ast})\sigma(s_{j, \beta}^{\ast})\right)}{\mathrm{d}\beta} \right|_{\beta=0},
\end{equation*}
which can further be expressed as:
\begin{equation}
     \label{eq:decompo}
     \left.  \frac{\mathrm{d}\left(\sigma(s_{i, \beta}^{\ast})\sigma(s_{j, \beta}^{\ast})\right)}{\mathrm{d}\beta} \right|_{\beta=0} =  \left. \left( \sigma(s_{i, \beta}^{\ast}) \frac{\mathrm{d}\sigma(s_{j, \beta}^{\ast})}{\mathrm{d}\beta} \right) \right|_{\beta=0} + \left. \left( \sigma(s_{j, \beta}^{\ast})  \frac{\mathrm{d}\sigma(s_{i, \beta}^{\ast})}{\mathrm{d}\beta} \right) \right|_{\beta=0}.
\end{equation}
Using the same assumptions as Section \ref{sec:theory}, the map $\beta \in U \mapsto s_{i, \beta}^{\ast}$ is holomorphic, and so is the map
$\beta \in U \mapsto \sigma(s_{i, \beta}^{\ast})$ by composition.
We can thus expand it in a power series around zero:
\begin{equation*}
    \sigma(s_{i, \beta}^{\ast}) = \sum_{k=0}^{\infty} \frac{\beta^{k}}{k!} \left. \frac{\mathrm{d}^{k}\sigma(s_{i, \beta}^{\ast})}{\mathrm{d}\beta^{k}}\right|_{\beta=0}.
\end{equation*}
We can then separate the sum into the real and imaginary parts because the series converge absolutely. 
Assuming that $\beta = |\beta|e^{2\mathrm{i} \pi t/T}$, and applying the Euler formula, we obtain:
\begin{eqnarray}
    \operatorname{Re}\left(\sigma(s_{i, \beta}^{\ast})\right) &=& \sum_{k=0}^{\infty} \cos{\left( \frac{2k \pi t}{T}\right)}\frac{|\beta|^{k}}{k!} \left. \frac{\mathrm{d}^{k}\sigma(s_{i, \beta}^{\ast})}{\mathrm{d}\beta^{k}}\right|_{\beta=0}, \nonumber \\
    \operatorname{Im}\left(\sigma(s_{i, \beta}^{\ast})\right) &=& \sum_{k=1}^{\infty} \sin{\left(\frac{2k \pi t}{T}\right)}\frac{|\beta|^{k}}{k!} \left. \frac{\mathrm{d}^{k}\sigma(s_{i, \beta}^{\ast})}{\mathrm{d}\beta^{k}}\right|_{\beta=0}.
\end{eqnarray}
Therefore, the first derivative ($k=1$) with respect to $\beta$ in Eq.~\eqref{eq:decompo} can be obtained by either projecting the real part against the cosine function, or imaginary part against the sine function:
\begin{align*}
    \left. \frac{\mathrm{d}\sigma(s_{i, \beta}^{\ast})}{\mathrm{d}\beta} \right|_{\beta=0} &= \frac{2}{|\beta|T} \int_{0}^{T} \operatorname{Re}\left(\sigma(s_{i, \beta}^{\ast})\right) \cos{\left( \frac{2 \pi t}{T}\right)} \mathrm{d}t, \\
    &= \frac{2}{|\beta|T} \int_{0}^{T} \operatorname{Im}\left(\sigma(s_{i, \beta}^{\ast})\right) \sin{\left( \frac{2 \pi t}{T}\right)} \mathrm{d}t,
\end{align*}
by orthogonality of the family $\left((t \mapsto \cos{\left( \frac{2k \pi t}{T}\right)})_{k \geq 0}, (t \mapsto \sin{\left( \frac{2k \pi t}{T}\right)})_{k \geq 0}\right)$ in $L^{2}[0, T]$.
Note that the higher order derivatives with respect to $\beta$ can be obtained as well by projecting against the corresponding cosine or sine function.
The same holds for index $j$ by symmetry.
As an interesting final note, if we define $\operatorname{Re}_{1}(\sigma(s_{i, \beta}^{\ast}))$ and $\operatorname{Im}_{1}(\sigma(s_{i, \beta}^{\ast}))$, the first order contributions in $\beta$ to the real and imaginary parts of the neural activity, we find that they are the only ones to contribute to the gradient computation. 
We can appreciate that they are related through $\operatorname{Im}_{1}(\sigma(s_{i, \beta}^{\ast})) = - \frac{T}{2 \pi} \frac{\mathrm{d}}{\mathrm{d}t}\operatorname{Re}_{1}(\sigma(s_{i, \beta}^{\ast}))$, where the time derivative is at the scale of the teaching signal. 

\subsection{Derivation of the bias term}
\label{sec:app_proof_estimate}

Recall the definition of $\beta_{k} := |\beta|e^{2\mathrm{i}\pi k/N}$, for $k \in [0, ..., N-1]$, $N \geq 2$, and the gradient estimate (Eq.~\eqref{eq:estimate}):
\begin{equation*}
    \hat{\nabla}(N) := \frac{1}{N|\beta|}\sum_{k=0}^{N-1} \frac{\partial F}{\partial \boldsymbol{\theta}}\left(\boldsymbol{\theta}, \mathbf{s}^{\ast}_{\beta_{k}},  \beta_{k}\right)e^{-2 \mathrm{i} \pi k/N}.
\end{equation*}
For simplicity of notation, we rewrite $\partial_{\boldsymbol{\theta}}F(\beta) := \frac{\partial F}{\partial \boldsymbol{\theta}}(\boldsymbol{\theta}, \mathbf{s}^{\ast}_{\beta},  \beta)$. 
The function $\beta \mapsto \partial_{\boldsymbol{\theta}}F(\beta)$ is holomorphic on an open set $U$ including zero, and so is $\beta \mapsto \mathbf{s}^{\ast}_{\beta}$ by the holomorphic implicit function theorem.
We assume the $\beta_{k}$ are included in $U$, so that we can expand $\partial_{\boldsymbol{\theta}}F(\beta_{k})$ in a power series around zero:
\begin{equation*}
    \partial_{\boldsymbol{\theta}}F(\beta_{k}) = \sum_{p=0}^{\infty}\frac{\beta_{k}^{p}}{p!}\left[\frac{\mathrm{d}^{p}}{\mathrm{d}\beta^{p}}\partial_{\boldsymbol{\theta}}F\right](0),
\end{equation*}
we define $C_{p}:=\left[\frac{\mathrm{d}^{p}}{\mathrm{d}\beta^{p}}\partial_{\boldsymbol{\theta}}F\right](0)$.
The quantity of interest is $C_{1}$, since it is the gradient of the loss (Eq.~\eqref{eq:proof})
\begin{align*}
    \partial_{\boldsymbol{\theta}}F(\beta_{k}) &= C_{0} + \beta_{k}C_{1} + \sum_{p=2}^{\infty}\frac{\beta_{k}^{p}}{p!}C_{p} \\
    \frac{\partial_{\boldsymbol{\theta}}F(\beta_{k})}{\beta_{k}} &= C_{0}\beta_{k}^{-1} + C_{1} + \sum_{p=2}^{\infty}\frac{\beta_{k}^{p-1}}{p!}C_{p} \\
    \frac{1}{N}\sum_{k=0}^{N-1}\frac{\partial_{\boldsymbol{\theta}}F(\beta_{k})}{\beta_{k}} &= C_{1} + C_{0}\frac{1}{N}\sum_{k=0}^{N-1}\beta_{k}^{-1} + \frac{1}{N}\sum_{k=0}^{N-1}\sum_{p=2}^{\infty}\frac{\beta_{k}^{p-1}}{p!}C_{p}.
\end{align*}
The sum symbols on the right can be interchanged thanks to the absolute convergence of the power series.
\begin{align*}
    \frac{1}{N}\sum_{k=0}^{N-1}\frac{\partial_{\boldsymbol{\theta}}F(\beta_{k})}{\beta_{k}} &= C_{1} + C_{0}\frac{1}{N}\sum_{k=0}^{N-1}\beta_{k}^{-1} + \sum_{p=2}^{\infty}\frac{C_{p}}{p!}\frac{1}{N}\sum_{k=0}^{N-1}\beta_{k}^{p-1} \\
    \frac{1}{N}\sum_{k=0}^{N-1}\frac{\partial_{\boldsymbol{\theta}}F(\beta_{k})}{\beta_{k}} &= C_{1} + C_{0}\frac{1}{N|\beta|}\sum_{k=0}^{N-1}e^{-2\mathrm{i}\pi k/N} + \sum_{p=1}^{\infty}\frac{C_{p+1}}{(p+1)!}\frac{|\beta|^{p}}{N}\sum_{k=0}^{N-1}e^{2\mathrm{i}\pi pk/N}.
\end{align*}
It remains to evaluate the geometric sums of the form $\sum_{k=0}^{N-1}e^{2\mathrm{i}\pi pk/N}$ for $p=-1$ and $p\geq1$.
If $N$ divides $p$, i.e $p \equiv 0~(N)$, then we can write $p = Nq$ and we have:
\begin{equation*}
    \sum_{k=0}^{N-1}e^{2\mathrm{i}\pi pk/N} = \sum_{k=0}^{N-1}e^{2\mathrm{i}\pi qNk/N} = \sum_{k=0}^{N-1}e^{2\mathrm{i}\pi qk} = \sum_{k=0}^{N-1}1 = N.
\end{equation*}
If $N$ does not divide $p$, then the geometric sum of ratio $e^{2\mathrm{i}\pi p/N}$ can be computed:
\begin{equation*}
    \sum_{k=0}^{N-1}e^{2\mathrm{i}\pi pk/N} = \frac{1 - (e^{2\mathrm{i}\pi p/N})^{N}}{1 - e^{2\mathrm{i}\pi p/N}} = \frac{1 - e^{2\mathrm{i}\pi p}}{1 - e^{2\mathrm{i}\pi p/N}} = \frac{1 - 1}{1 - e^{2\mathrm{i}\pi p/N}} = 0.
\end{equation*}
We thus have that:
\begin{align*}
    \frac{1}{N}\sum_{k=0}^{N-1}\frac{\partial_{\boldsymbol{\theta}}F(\beta_{k})}{\beta_{k}} &= C_{1} + C_{0}\frac{1}{N|\beta|}\underbrace{\sum_{k=0}^{N-1}e^{-2\mathrm{i}\pi k/N}}_{=0} + \sum_{p=1}^{\infty}\frac{C_{p+1}}{(p+1)!}\frac{|\beta|^{p}}{N}\underbrace{\sum_{k=0}^{N-1}e^{2\mathrm{i}\pi pk/N}}_{=0 \text{~when~} p \not\equiv 0~(N)} \\
    \frac{1}{N}\sum_{k=0}^{N-1}\frac{\partial_{\boldsymbol{\theta}}F(\beta_{k})}{\beta_{k}} &= C_{1} +  \sum_{p \equiv 0 ~(N)}^{\infty}\frac{C_{p+1}|\beta|^{p}}{(p+1)!},
\end{align*}
which is the result of Eq.~\eqref{eq:quantitative}.

\subsection{Derivation of the online estimate}
\label{sec:app_proof_continuous}

Recall the formula of the online estimate (Eq.~\eqref{eq:continuous}):
\begin{equation*}
    \widetilde{\nabla}(T_{\text{plas}}) := -\frac{1}{T_{\text{plas}}|\beta|} \int_{0}^{T_{\text{plas}}} \sigma_{i}(t) \sigma_{j}(t) e^{-2 \mathrm{i} \pi t/T_{\text{osc}}}\mathrm{d}t.
\end{equation*}
If $T_{\text{dyn}} \ll T_{\text{osc}}$, the product of activities can be replaced by its value at the fixed point, and an exact gradient is computed after each period (Eq.~\eqref{eq:hopfield_case}).
Then if $T_{\text{osc}} \ll T_{\text{plas}}$, the integral can be divided into an integer amount of completed periods plus a remainder: $T_{\text{plas}} = k T_{\text{osc}} + T_{\text{rem}}$, where $k \in \mathbb{N}$ and $T_{\text{rem}} < T_{\text{osc}}$.
We then have by periodicity that:
\begin{equation*}
    \widetilde{\nabla}(T_{\text{plas}}) = \underbrace{\frac{k T_{\text{osc}}}{k T_{\text{osc}} + T_{\text{rem}}}}_{\to 1~\text{when}~T_{\text{plas}} \to \infty} \frac{\mathrm{d}\mathcal{L}}{\mathrm{d}w_{ij}}  - \underbrace{\frac{1}{k T_{\text{osc}} + T_{\text{rem}}}}_{\to 0~\text{when}~T_{\text{plas}} \to \infty} \frac{1}{|\beta|} \int_{0}^{T_{\text{rem}}} \sigma_{i}^{\ast}(t) \sigma_{j}^{\ast}(t) e^{-2 \mathrm{i} \pi t/T_{\text{osc}}}\mathrm{d}t.
\end{equation*}
In this way, when averaging over large $T_{\text{plas}}$, the number of completed cycles outweighs  the current period. Thus, by simply averaging over many oscillation cycles allows estimating gradients without explicit separate phases. 

\section{Detailed architecture}
\label{sec:app_detailed_archi}

\begin{figure}
  \centering
  \includegraphics[width=\textwidth]{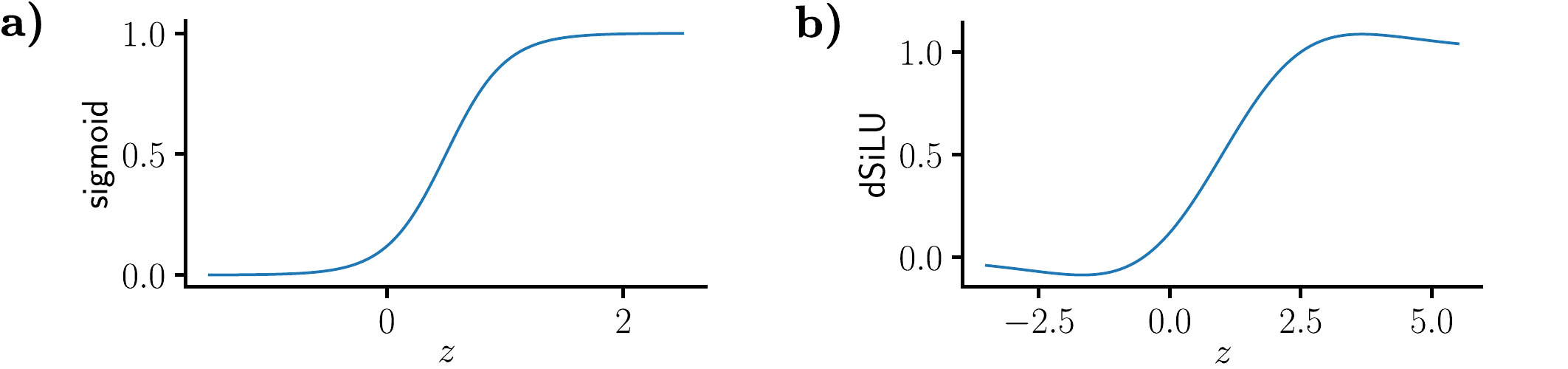}
  \caption{
  \textbf{a)} The shifted sigmoid we used in multi-layered perceptrons experiments.
  \textbf{b)} The dSiLU we used in \acp{CNN} experiments.
  }
  \label{fig:activation}
\end{figure}

\subsection{Dynamics for multi-layer perceptrons}
\label{sec:app_mlp_archi}

Assuming a number of $L$ layers, we note $\mathbf{s}_{l}$ the subset of units in layer $l$, with $\mathbf{s}_{0} = \mathbf{x}$ and $\mathbf{y}$ the one hot class label.
We note $\mathbf{W}_{l}$, and $\mathbf{b}_{l}$ the weight and biases of layer $l \geq 1$.
The energy function $F$ for a MLP optimizing the cross entropy loss reads:
\begin{equation*}
    F(\boldsymbol{\theta}, \mathbf{s}, \beta, \mathbf{y}) =  \sum_{l=1}^{L-1}\left[\frac{1}{2}\|\mathbf{s}_{l}\|^{2} - \sigma(\mathbf{s}_{l-1})^{\top}\cdot\mathbf{W}_{l}\cdot\sigma(\mathbf{s}_{l}) - \mathbf{b}_{l}^{\top} \cdot \sigma(\mathbf{s}_{l})\right] - \beta \mathbf{y}^{\top} \cdot \log(\mathbf{s}_{L}).
\end{equation*}
The activation function used for multi-layer perceptrons is the shifted sigmoid $z \mapsto 1/(1+e^{-4z+2})$ (Fig.~\ref{fig:activation}a).
We use the layer-wise discrete dynamics introduced by [\hyperlink{ernoult}{S3}, \hyperlink{laborieux}{S4}], which read:
\begin{align*}
\label{eq:mlp_dyn}
\left\{
\begin{array}{llll}
 \mathbf{s}_{l} &\leftarrow& \sigma \left( \mathbf{W}_{l} \mathbf{s}_{l-1} + {\mathbf{W}_{l+1}}^{\top} \mathbf{s}_{l+1} + \mathbf{b}_{l} + \boldsymbol{\eta}_{l} \right), & \text{for} ~ 1 \leq l \leq L-2 \\[.3cm]
 \mathbf{s}_{L-1} &\leftarrow& \sigma \left( \mathbf{W}_{L-1} \mathbf{s}_{L-2} + \beta {\mathbf{W}_{L}}^{\top} (\mathbf{y} - \mathbf{s}_{L}) + \mathbf{b}_{L-1} + \boldsymbol{\eta}_{L-1} \right), \\[.3cm]
 \mathbf{s}_{L} &\leftarrow& {\rm Softmax}\left(\mathbf{W}_{L} \mathbf{s}_{L-1} + \mathbf{b}_{L}\right),
\end{array}
\right.
\end{align*}
where $\boldsymbol{\eta}_{l}$ is an optional Gaussian noise added for Fig.~\ref{fig:cont_noise}c and Table \ref{tab:cont_noise}.
The noise was sampled at each time step.

\subsection{Dynamics for convolutional neural networks}
\label{sec:app_cnn_archi}

The activation function used for \acp{CNN} is a sigmoid-weighted linear unit [\hyperlink{dsilu}{S5}] (Fig.~\ref{fig:activation}b):
\begin{equation*}
    \operatorname{dSiLU}(z) := \left(\frac{z}{2}\right)\frac{1}{1+e^{-z}} + \left(1-\frac{z}{2}\right)\frac{1}{1+e^{-z+2}}.
\end{equation*}
We denote by $\mathcal{P}$ the pooling operation, and $\tilde{\mathcal{P}}$ the corresponding unpooling operation.
`$\ast$' denotes the convolution when preceded by $\mathbf{W}$ and transpose convolution when preceded by $\mathbf{W}^{\top}$.
The energy function $F$ for a CNN optimizing the cross entropy loss reads:
\begin{align*}
    &F(\boldsymbol{\theta}, \mathbf{s}, \beta, \mathbf{y}) =  \sum_{l \in \{\text{Conv}\}}\left[\frac{1}{2}\|\mathbf{s}_{l}\|^{2} - \sigma(\mathbf{s}_{l-1})^{\top}\cdot\mathcal{P}(\mathbf{W}_{l}\ast\sigma(\mathbf{s}_{l})) - \mathbf{b}_{l}^{\top} \cdot \sigma(\mathbf{s}_{l})\right] \\
    &\sum_{l \in \{\text{FC}\}}\left[\frac{1}{2}\|\mathbf{s}_{l}\|^{2} - \sigma(\mathbf{s}_{l-1})^{\top}\cdot\mathbf{W}_{l}\cdot\sigma(\mathbf{s}_{l}) - \mathbf{b}_{l}^{\top} \cdot \sigma(\mathbf{s}_{l})\right] - \beta \mathbf{y}^{\top} \cdot \log(\mathbf{s}_{L}).
\end{align*}
We use the layer-wise discrete dynamics introduced by [\hyperlink{ernoult}{S3}, \hyperlink{laborieux}{S4}], which read:
\begin{align*}
\label{eq:cnn_dyn}
\left\{
\begin{array}{llll}
\displaystyle \mathbf{s}_{l} & \leftarrow & \sigma \left( \mathcal{P}(\mathbf{W}_{l} \ast \mathbf{s}_{l-1}) + \mathbf{W}_{l+1}^{\top} \ast \tilde{\mathcal{P}}(\mathbf{s}_{l+1}) + \mathbf{b}_{l} \right), & \text{for}~l \in \{\text{Conv layers}\} \\[.3cm]
 \mathbf{s}_{l} &\leftarrow& \sigma \left( \mathbf{W}_{l} \mathbf{s}_{l-1} + {\mathbf{W}_{l+1}}^{\top} \mathbf{s}_{l+1} + \mathbf{b}_{l} \right), & \text{for}~l \in \{\text{FC layers}\} \\[.3cm]
 \mathbf{s}_{L-1} &\leftarrow& \sigma \left( \mathbf{W}_{L-1} \mathbf{s}_{L-2} + \beta {\mathbf{W}_{L}}^{\top} (\mathbf{y} - \mathbf{s}_{L}) + \mathbf{b}_{L-1} \right), \\[.3cm]
 \mathbf{s}_{L} &\leftarrow& {\rm Softmax}(\mathbf{W}_{L} \mathbf{s}_{L-1} + \mathbf{b}_{L}).
\end{array}
\right.
\end{align*}
We used Softmax pooling [\hyperlink{softpool}{S6}] with a tunable temperature $\tau$, instead of the non-holomorphic Max pooling.
The output $y$ of Softmax pooling of an input $\mathbf{x}$ is defined for a kernel neighbourhood $\mathbf{R}$ by:
\begin{equation*}
    y = \sum_{i \in \mathbf{R}} \left( \frac{e^{x_{i}/\tau}}{\sum_{j \in \mathbf{R}} e^{x_{j}/\tau}} \right) x_{i}.
\end{equation*}
Note that Softmax pooling interpolates between Average pooling ($\tau \to \infty$) and Max pooling ($\tau \to 0$).

\section{Layer-wise comparison of the gradient in a deep network}
\label{sec:app_sweep_beta}

Here we show in Fig.~\ref{fig:full_sweep} the complete layer-wise cosine similarities between the estimates of holomorphic EP for various $N$ and the true gradient computed by automatic differentiation.

\begin{figure}
  \centering
  \includegraphics[width=0.9\textwidth]{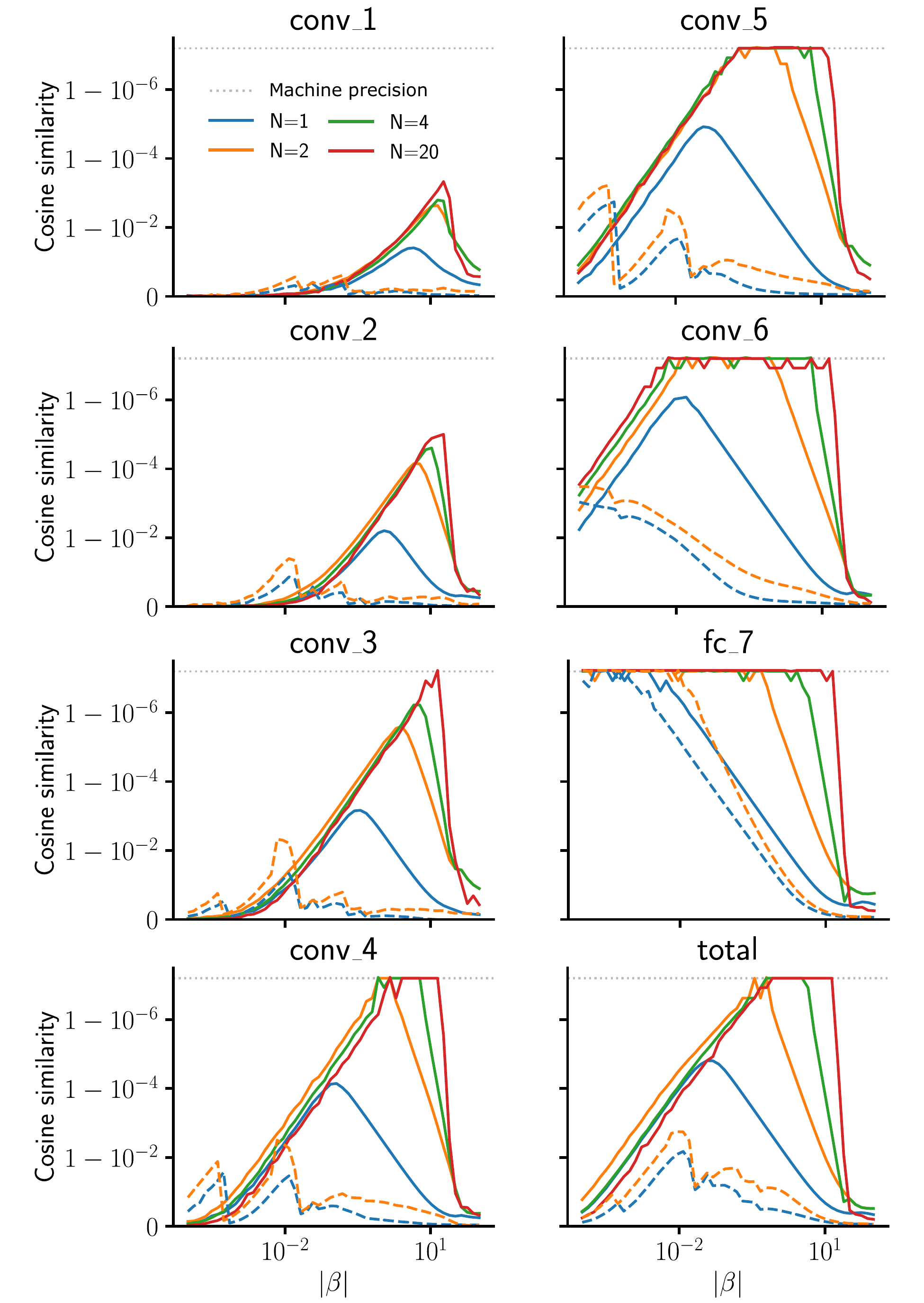}
  \caption{
  The complete layer-wise cosine similarity of Fig.~\ref{fig:sweep_beta}a).
  }
  \label{fig:full_sweep}
\end{figure}

\section{Dynamical stability in the complex plane}
\label{sec:app_fractal}

\begin{figure}
  \centering
  \includegraphics[width=0.9\textwidth]{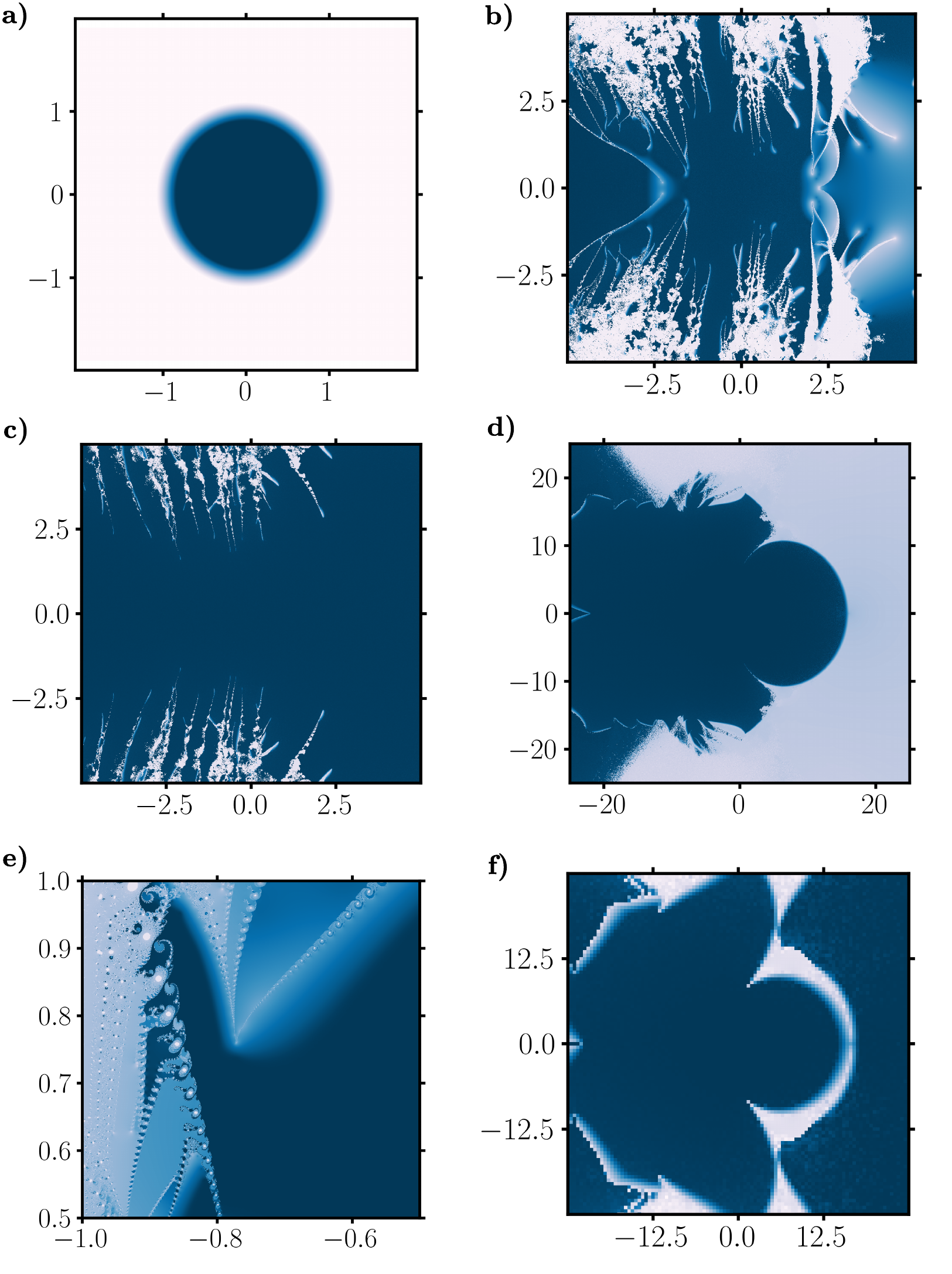}
  \caption{
  Map of convergence to a fixed point for complex $\beta$ in various settings.
  \textbf{a)}~\Ac{MLP} with linear activation function and low weight initialization.
  \textbf{b)}~\Ac{MLP} with shifted sigmoid activation and default weight initialization.
  \textbf{c)} Same as b) but with reduced weight initialization.
  \textbf{d)} Same as b) but with dSiLU activation function.
  \textbf{e)} Zoom at a frontier between stable and unstable regions.
  \textbf{f)} Small \ac{CNN} with dSiLU activation and Softmax pooling.
  }
  \label{fig:fractals}
\end{figure}

We show in Fig.~\ref{fig:fractals} how the area in $\mathbb{C}$ where the fixed point empirically exists varies with different architecture choices.
The data used for each panel is a digit from the MNIST dataset.
As in Fig.\ref{fig:overview}b), dark blue means that the fixed point exists, whereas light areas denote divergence.
These diverging areas could be due to the poles of the activation functions used.
For example, the sigmoid function $z \mapsto 1/(1+e^{-z})$ has $\{(2k+1)\mathrm{i}\pi ; k \in \mathbb{Z}\}$ as a set of poles where it diverges.
Although we did not systematically study this phenomenon in this work, we strongly suspect that these unstable areas are partly the result of the teaching signal being too strong or the weights being poorly distributed, thereby driving the complex neural activities near to the poles.
To some extent, the poles can be brought farther by introducing a coefficient in the exponential, but it results in flatter activation functions on the real axis, so a trade-off should be found.
In practice, we found that choosing reasonably the activation function, weight initialization, and teaching radius $|\beta|$ lead to enough stable areas around 0 to compute the gradient.

\section{Hyperparameters}
\label{sec:app_hyperparameters}

\subsection{MNIST experiments}
The digits were rescaled by 255 and flattened.
The hyperparameters used for training are reported in Table \ref{tab:mnist_hyper}, and the training errors are reported in Table~\ref{tab:app_cont_noise}.

\begin{table}[tbh]
  \caption{Hyperparameters used for the MNIST training experiment of Table \ref{tab:cont_noise}.}
  \label{tab:mnist_hyper}
  \centering
  \begin{tabular}{lccc}
    \toprule
    Hyperparameter & Classic EP &  \ac{hEP} & Online \ac{hEP} \\
    \midrule
    Batch size & 20 & 20 & 20 \\
    Learning rate & 5e-2 & 5e-2 & 5e-2 \\
    Epochs & 50 & 50 & 50 \\
    $|\beta|$ & 0.1 and 0.4 & 0.4 & 0.4 \\
    $T_{\text{free}}$ & 350 & 200 & 200* \\
    $T_{\text{nudge}}$ & 350 & 50 & N/A \\
    $T_{\text{osc}}$ & N/A & N/A & 300 \\
    $T_{\text{plas}}$ & N/A & N/A & 900 \\
    $N$ & N/A & 10 & 10 \\
    Noise** & 4e-2 & 4e-2 & 4e-2\\
    \bottomrule
    \multicolumn{4}{l}{* \footnotesize{Only used for evaluation}} \\
    \multicolumn{4}{l}{** \footnotesize{Standard deviation of the Gaussian noise for experiments with noise.}} \\
  \end{tabular}
\end{table}

\begin{table}[tbh]
  \caption{MNIST training and validation errors for classic EP \cite{scellier2017equilibrium}, \ac{hEP}, and online \ac{hEP}, with and without noise. 
  All results are averages ($n=3$) $\pm$ one standard deviation.}
  \label{tab:app_cont_noise}
  \centering
  \scalebox{0.85}{
  \begin{tabular}{lllllllll}
    \toprule
    & \multicolumn{2}{c}{Class. EP, $|\beta|=0.1$} & \multicolumn{2}{c}{Class. EP, $|\beta|=0.4$} &  \multicolumn{2}{c}{\ac{hEP}, $|\beta|=0.4$} & \multicolumn{2}{c}{Online \ac{hEP}} \\
    \cmidrule(r){2-9}
      Noise  & Train (\%) &  Val (\%) & Train (\%) &  Val (\%)  & Train (\%)  & Val (\%)  & Train (\%) & Val. (\%)\\
    \midrule
    No   & 0.05 $^{\pm 0.02}$ & 1.87 $^{\pm 0.01}$ & 0.19 $^{\pm 0.05}$ & 2.24 $^{\pm 0.05}$ & 0.02 $^{\pm 0.01}$  & 1.97 $^{\pm 0.08}$ & 0.11 $^{\pm 0.01}$ & 2.05 $^{\pm 0.02}$ \\
    Yes  & 88.8 $^{\pm 0.0}$  & 88.7 $^{\pm 0.0}$ & 1.96 $^{\pm 0.2}$  & 3.01 $^{\pm 0.1}$ &  0.14 $^{\pm 0.03}$ & 1.96 $^{\pm 0.07}$ & 0.13 $^{\pm 0.03}$ & 1.91 $^{\pm 0.16}$\\
    \bottomrule
  \end{tabular}
  }
\end{table}

\subsection{Large-scale experiments}

In the training experiments for CIFAR-10, CIFAR-100, and ImageNet $32 \times 32$, the training data was normalized, then augmented with 50\% chance random horizontal flips, resized to $36 \times 36$ resolution with padding, and cropped randomly back to $32 \times 32$.
The optimizer used was stochastic gradient descent with momentum.
Pooling was applied at all layers for the five-layer CNN, and every other layer starting with the first layer in the seven-layer CNN.

\begin{table}[tbh]
  \caption{Hyperparameters used for the VGG training experiments of Table \ref{tab:perf} and Fig.\ref{fig:sweep_beta}c.}
  \label{tab:large_scale_hyper}
  \centering
   \scalebox{0.97}{
  \begin{tabular}{lcccc}
    \toprule
    Hyperparameter & CIFAR-10 &  CIFAR-100 & ImageNet $32 \times 32$ & CIFAR-10 (Fig.\ref{fig:sweep_beta}c) \\
    \midrule
    Batch size & 128 & 128 & 256 & 128 \\[0.2cm]
    Channel sizes & \multicolumn{3}{c}{[128, 256, 512, 512]} & \small{[128, 128, 256, 256, 512, 512]} \\
    Kernel sizes & \multicolumn{3}{c}{[3, 3, 3, 3]} & \small{[3, 3, 3, 3, 3, 3]}\\
    Strides & \multicolumn{3}{c}{[1, 1, 1, 1]} & \small{[1, 1, 1, 1, 1, 1]}\\
    Paddings & \multicolumn{3}{c}{[1, 1, 1, 0]} & \small{[1, 1, 1, 0, 1, 0]}\\
    SoftPool window & \multicolumn{3}{c}{$2 \times 2$} & $2 \times 2$\\
    SoftPool stride & \multicolumn{3}{c}{2} & 2\\
    SoftPool temp. & \multicolumn{3}{c}{1} & 10\\[0.2cm]
    Initial LRs* & \multicolumn{3}{c}{\small{[25, 15, 10, 8, 5] $\times$ 1e-2}} & \small{[5, 4, 4, 3, 3, 2, 2] $\times$ 1e-2}\\
    Final LRs & \multicolumn{3}{c}{\small{[25, 15, 10, 8, 5] $\times$ 1e-9}} & \small{[5, 4, 4, 3, 3, 2, 2] $\times$ 1e-9} \\[0.2cm]
    Weight decay & 2e-3 & 1e-2 & 5e-4 & \small{[5, 5, 5, 5, 5, 5, 10] $\times$ 1e-4}\\
    Momentum & 0.9 & 0.9 & 0.9 & 0.9\\
    Epochs & 90 & 90 & 90 & 90\\
    $|\beta|$ & 1.0 & 1.0 & 1.0 & 1.0\\
    $T_{\text{free}}$ & 250 & 250 & 250 & 260\\
    $T_{\text{nudge}}$ & 60 & 60 & 60 & 60\\
    $N$ & 2 & 2 & 2 & 2 and 4\\
    \bottomrule
    \multicolumn{5}{l}{* \footnotesize{Learning rates were decayed with cosine annealing without restart [\hyperlink{cosine_annealing}{S7}].}} \\
  \end{tabular}
  }
\end{table}

\section{Simulations details}
\label{sec:app_simulations}

All simulations were performed on an in-house GPU cluster or workstations.
Each simulation in Table~\ref{tab:perf} was run in parallel on four NVIDIA V100 GPUs.
The training runs on ImageNet $32 \times 32$ took 5.5 days each for EP, and a few hours for BP.
The runs on CIFAR-10 and CIFAR-100 took one day on average depending on the architecture (5 or 7 layers) and the number of time steps used for the dynamics.
The use of complex numbers, although seamlessly implementable with Jax, results in longer simulation times due to the 64 bit-precision requirement (32 bit-precision for real and imaginary parts respectively).

\newpage
\section*{Supplementary References}

{

[\hypertarget{appel}{S1}] Walter Appel.
\newblock Mathematics for physics and physicists.
\newblock 2007.

[\hypertarget{cartan}{S2}] Henri Cartan.
\newblock \emph{Th{\'e}orie {\'e}l{\'e}mentaire des fonctions analytiques d'une
  ou plusieurs varibales complexes: Avec le concours de Reiji Takahashi}.
\newblock Hermann, 1961.

[\hypertarget{ernoult}{S3}] Maxence Ernoult, Julie Grollier, Damien Querlioz, Yoshua Bengio, and Benjamin
  Scellier.
\newblock Updates of equilibrium prop match gradients of backprop through time
  in an rnn with static input.
\newblock \emph{Advances in neural information processing systems}, 32, 2019.

[\hypertarget{laborieux}{S4}] Axel Laborieux, Maxence Ernoult, Benjamin Scellier, Yoshua Bengio, Julie
  Grollier, and Damien Querlioz.
\newblock Scaling equilibrium propagation to deep convnets by drastically
  reducing its gradient estimator bias.
\newblock \emph{Frontiers in neuroscience}, 15:\penalty0 129, 2021.

[\hypertarget{dsilu}{S5}] Stefan Elfwing, Eiji Uchibe, and Kenji Doya.
\newblock Sigmoid-weighted linear units for neural network function
  approximation in reinforcement learning.
\newblock \emph{Neural Networks}, 107:\penalty0 3--11, 2018.

[\hypertarget{softpool}{S6}] A~Stergiou, R~Poppe, and G~Kalliatakis.
\newblock Refining activation downsampling with softpool. arxiv 2021.
\newblock \emph{arXiv preprint arXiv:2101.00440}.

[\hypertarget{cosine_annealing}{S7}] Ilya Loshchilov and Frank Hutter.
\newblock Decoupled weight decay regularization.
\newblock \emph{arXiv preprint arXiv:1711.05101}, 2017.

}

\end{document}